\theoremstyle{plain}
\newtheorem{theorem}{Theorem}[section]
\newtheorem{proposition}[theorem]{Proposition}
\theoremstyle{definition}
\newtheorem{definition}[theorem]{Definition}
\theoremstyle{remark}
\icmltitlerunning{Integrating Domain Knowledge for handling Limited Data in Offline RL}
\begin{document}

\twocolumn[
\icmltitle{Integrating Domain Knowledge for handling Limited Data in Offline RL}



\icmlsetsymbol{equal}{*}

\begin{icmlauthorlist}
\icmlauthor{Briti Gangopadhyay}{comp}
\icmlauthor{Zhao Wang}{comp,www}
\icmlauthor{Jia-Fong Yeh}{yyy}
\icmlauthor{Shingo Takamatsu}{comp}
\end{icmlauthorlist}

\icmlaffiliation{comp}{Sony Group Corporation, Tokyo, Japan}
\icmlaffiliation{yyy}{National Taiwan University, Taiwan}
\icmlaffiliation{www}{Waseda University, Tokyo, Japan}

\icmlcorrespondingauthor{Briti Gangopadhyay}{briti.gangopadhyay@sony.com}

\icmlkeywords{Offline Reinforcement Learning, Domain Knowledge}

\vskip 0.3in
]



\printAffiliationsAndNotice{}  

\begin{abstract}
With the ability to learn from static datasets, Offline Reinforcement Learning (RL) emerges as a compelling avenue for real-world applications. However, state-of-the-art offline RL algorithms perform sub-optimally when confronted with limited data confined to specific regions within the state space. The performance degradation is attributed to the inability of offline RL algorithms to learn appropriate actions for rare or unseen observations. This paper proposes a novel domain knowledge-based regularization technique and adaptively refines the initial domain knowledge to considerably boost performance in limited data with partially omitted states. The key insight is that the regularization term mitigates erroneous actions for sparse samples and unobserved states covered by domain knowledge. Empirical evaluations on standard discrete environment datasets demonstrate a substantial average performance increase of at least 27\% compared to existing offline RL algorithms operating on limited data.

\end{abstract}

\section{Introduction}
\label{submission}

Reinforcement learning (RL) stands out as a widely embraced machine learning technique, recognized for its ability to scale effectively to intricate environments with minimal reliance on extensive feature engineering. However, RL's fundamental online learning paradigm is impractical for numerous tasks due to the prohibitive costs and potential hazards involved \cite{dulac2021challenges}. Offline RL \cite{ernst2005tree, Prudencio2023}, also referred to as batch RL, is a learning approach that focuses on extracting knowledge solely from static datasets. This class of algorithms has a wider range of applications being particularly appealing to real-world
data sets from business \cite{zhang2021domain}, healthcare \cite{liu2020reinforcement}, and robotics \cite{pmlr-v164-sinha22a}. However, offline RL poses unique challenges, including over-fitting and the need for generalization to data not present in the dataset. To surpass the behavior policy, offline RL algorithms need to query Q values of actions not in the dataset, causing extrapolation errors \cite{kumar2019stabilizing}. Most offline RL algorithms address this problem by enforcing constraints that ensure that the learned policy does not deviate too far away from the data set's state action distribution \cite{fujimoto2019off, fujimoto2021minimalist} or is conservative towards Out-of-Distribution (OOD) actions \cite{kumar2019stabilizing, kostrikov2021offline}. However, such approaches are designed on coherent batches \cite{fujimoto2019off}, which do not account for OOD states.

In many domains, such as business and healthcare, available data is scarce and often confined to expert behaviors within a limited state space. Learning on such limited data sets can curtail the generalization capabilities of state-of-the-art (SOTA) offline RL algorithms, resulting in sub-optimal performance
\cite{Levine2020OfflineRL}. We illustrate this limitation via Fig \ref{fig1}.  In Fig \ref{fig1}a) the state action space of a simple Mountain Car environment \cite{moore1990efficient}  is plotted for an expert dataset \cite{schweighofer2022dataset} and a partial dataset with first 10\% samples from the entire dataset. Fig \ref{fig1}b) shows the average reward obtained over these data sets and the average difference between the Q value of action taken by the under-performing Conservative Q Learning (CQL) \cite{kumar2019stabilizing} agent and the action in the full expert dataset for unseen states. It can be observed that the performance of the offline RL agent considerably drops. This is attributed to the critic overestimating the Q value of non-optimal actions for states that do not occur in the dataset while training.

\begin{figure*}[!t]
\includegraphics[width=\textwidth]{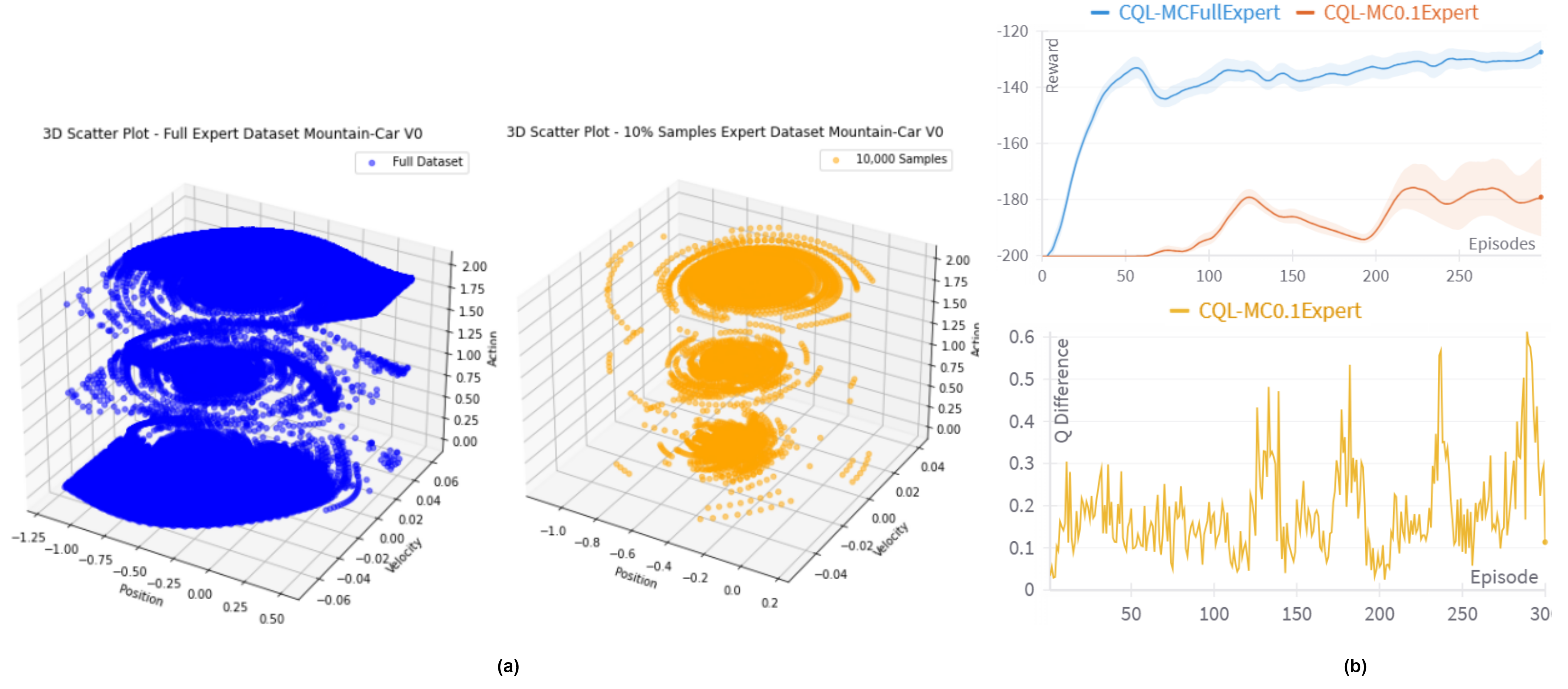}
\caption{a) Full expert, Mountain Car dataset,  and reduced dataset with first 10\% samples showing distribution of state (position, velocity) and action b) CQL agent converging to a sub-optimal policy for reduced dataset exhibiting high Q values for actions different from actions in the expert dataset for unseen states.}
\label{fig1}
\end{figure*}
In numerous real-world applications, expert insights regarding the general behavior of a policy are often accessible \cite{silva2021encoding}. While these insights may not be optimal, they serve as valuable guidelines for understanding the overall behavior of the policy. A rich literature in knowledge distillation \cite{hu2016harnessing} has shown that teacher networks trained on domain knowledge can transfer knowledge to another network unaware of it. This work aims to leverage a teacher network mimicking simple decision tree-based domain knowledge to help offline RL generalize in limited data settings. 

 The paper makes the following novel contributions:
\begin{itemize}
\item We introduce an algorithm dubbed  \textbf{ExID}, that leverages intuitive human obtainable expert insights. The domain expertise is incorporated into a teacher policy, which improves offline RL in limited-data settings through regularization.
\item We propose a method for refining the teacher based on expected performance improvement of the offline policy during training, improving the teacher network beyond initial heuristics. 
\item We experimentally demonstrate the effectiveness of our methodology on several discrete OpenAI gym and Minigrid environments with standard offline RL data sets and show that ExID significantly exceeds the performance of classical offline RL algorithms when faced with limited data.
\end{itemize}

\section{Related Work}

This work improves offline RL learning on batches sampled from static datasets using domain expertise.  One of the major concerns in offline RL is the erroneous extrapolation of OOD actions \cite{fujimoto2019off}. Two techniques have been studied in the literature to prevent such errors. 1) Constraining the policy to be close to the behavior policy 2) Penalizing overly optimistic Q values \cite{levine2020offline}. We discuss a few relevant algorithms following these principles. In Batch-Constrained
deep Q-learning (BCQ) \cite{fujimoto2019off} candidate actions sampled from an adversarial generative model are considered, aiming to balance proximity to the batch while enhancing action diversity. Algorithms like Random Ensemble Mixture model (REM) \cite{agarwal2020optimistic}, Ensemble-Diversiﬁed Actor-Critic (EDAC) \cite{an2021uncertainty} and Uncertainty Weighted Actor-Critic (UWAC) \cite{Wu2021UncertaintyWA} penalize the Q value according to uncertainty by either using Q ensemble networks or directly weighting the loss with uncertainty. State-of-the-art algorithm CQL \cite{kumar2019stabilizing} enforces regularization on Q-functions by incorporating a term that reduces Q-values for OOD actions while increasing Q-values for actions within the expected distribution. However, these algorithms do not handle OOD actions for states not encountered in the static dataset and can have errors induced by changes in transition probability.\\
Integration of domain knowledge in offline RL, though an important avenue, has not yet been extensively explored. Domain knowledge incorporation has improved online RL with tight regret bounds \cite{silva2021encoding, 10.5555/1795114.1795119}. In offline RL, bootstrapping via blending heuristics computed using Monte-Carlo returns with rewards has shown to outperform SOTA algorithms by 9\% \cite{geng2023improving}. A recent work improves offline RL by incorporating annotated action preferences trained using RankNet \cite{yang2023boosting}, reducing dependency on online fine-tuning. The closest to our work is Domain Knowledge guided Q learning (DKQ) \cite{zhang2021domain} where domain knowledge is represented in terms of action importance and the Q value is weighted according to importance. However, obtaining action importance in practical scenarios is nontrivial. Also, contrary to our work, the action preference is selected using the observation of the static dataset, not accounting for actions of states not in the dataset.
\begin{figure*}[!t]
\begin{center}
\includegraphics[width=\textwidth]{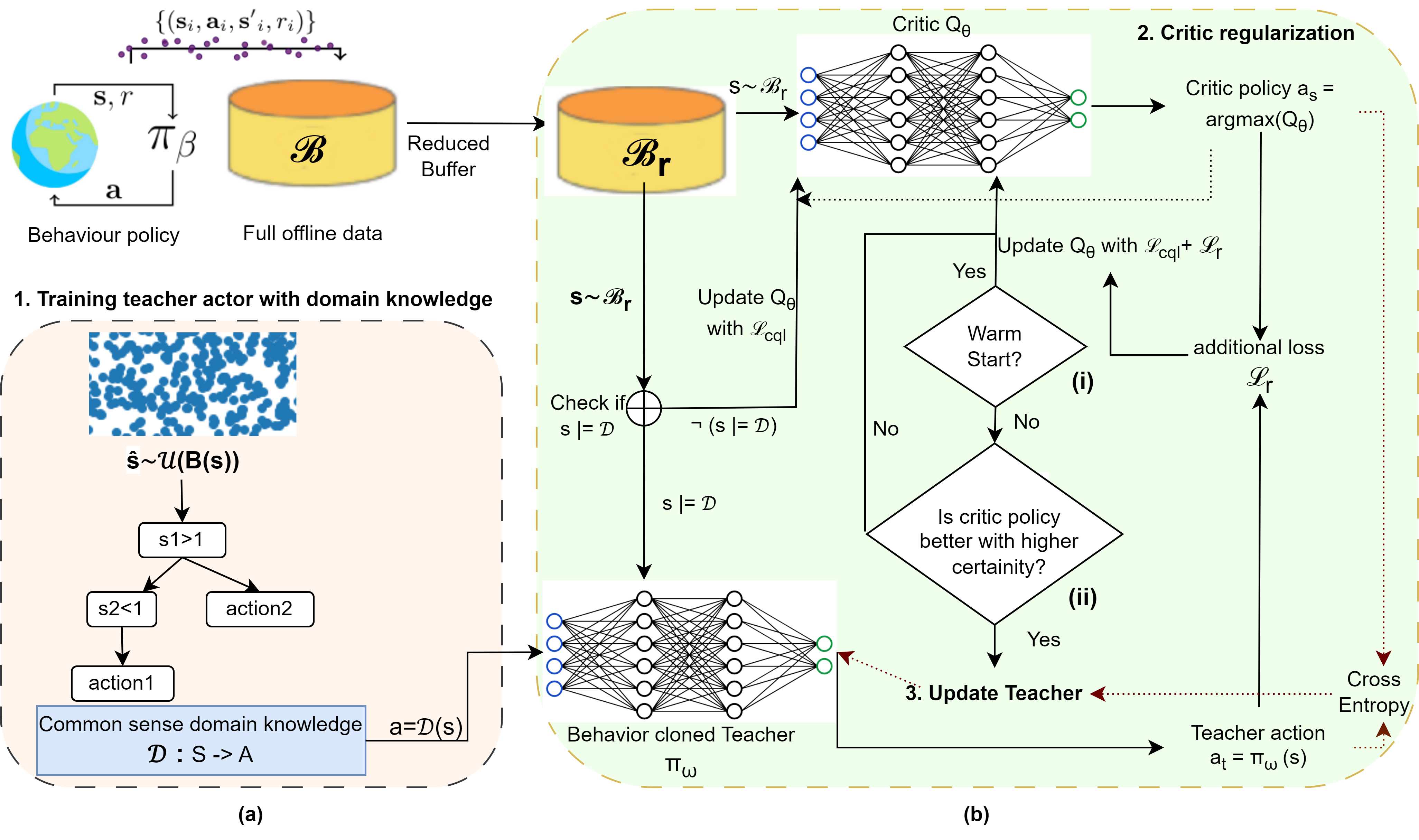}
\end{center}
\caption{Overview of the proposed methodology (a) Training a teacher policy network with domain knowledge and synthetic data (b) Updating the offline RL critic network with teacher network }
\label{fig2}
\end{figure*}
\section{Preliminaries}
A DRL setting is represented by a Markov Decision Process (MDP) formalized as $(S, A, T, r, d_0, \gamma)$. Here, $S$ denotes the state space, $A$ signifies the action space, $T(s' | s, a)$ represents the transition probability distribution, $r: S \times A \to \mathbb{R}$ is the reward function, $d_0$ represents the initial state distribution, and $\gamma \in (0, 1]$ is the discount factor. The primary objective of any DRL algorithm is to identify an optimal policy $\pi(a|s)$ that maximizes $\mathbb{E}_{s_t,a_t} [\sum_{t=0}^{ \infty} \gamma^t r(s_t, a_t)]$ where, $s_0 \sim d_0(.), a_t \sim  \pi(. | s_t),$ and $s' \sim T(. | s_t, a_t)$. Deep Q Learning \cite{mnih2015human} is a well-established method for achieving this objective. Deep Q networks (DQNs) are critic networks capable of predicting $Q_{\theta}(s, a)$, which is the expected cumulative discounted reward starting from state $s$ with action $a$. The critic network learns by minimizing the Bellman residual $(Q_{\theta}(s,a) - B^{\pi_{\theta}}Q_{\theta}(s,a))^2$ where ${B}^{\pi_{\theta}}Q_{\theta}(s,a) = \mathbb{E}_{s'\sim T}[r(s,a) + \gamma \mathbb{E}_{a'\sim \pi_{\theta}(.|s')}[Q_{\theta'}(s',a')]]$. $Q_{\theta'}$ is often modeled using another network called the target critic network. The policy $\pi_\theta$ chooses actions that maximize the Q value $\mathrm{max}_{a' \in A} Q_{\theta}(s',a')$. However, in offline RL where transitions are sampled from a pre-collected dataset $\mathcal{B}$, the chosen action $a'$ may exhibit a bias towards OOD actions with inaccurately high Q-values. To handle the erroneous propagation from OOD actions, CQL \cite{kumar2020conservative} learns conservative Q values by penalizing OOD actions.  The CQL loss for discrete action space is given by \begin{flalign}
\label{eq1}
\mathcal{L}_{cql}(\theta) =  \underset{Q}{\mathrm{min}} \; \alpha \; \mathbb{E}_{s \sim \mathcal{B}} [log\sum_a exp(Q_{\theta}(s,a)) -  \nonumber
\\ \mathbb{E}_{a \sim \mathcal{B}|s}[Q_{\theta}(s,a)]] +\frac{1}{2}\mathbb{E}_{s,a,s' \sim \mathcal{B}}[Q_{\theta}-Q_{\theta'}]^2 
\end{flalign}
Eq. \ref{eq1} encourages the policy to be close to the actions seen in the dataset. However, CQL works on the assumption of coherent batches, i.e., if $(s, a,s') \in \mathcal{B}$, then $s' \in \mathcal{B}$. There is no provision for handling OOD actions for $s \notin \mathcal{B}$, which can lead to policy failure when data is limited. In the next sections, we present ExID, a domain knowledge-based approach to improve performance in data-scarce scenarios.

\section{Problem Setting and Methodology}

 In our problem setting, the RL agent learns the policy on a reduced offline dataset. We define a reduced buffer as follows:
\begin{definition}
\label{def:1}
A reduced buffer $\mathcal{B}_r$ is a proper subset of the full dataset $\mathcal{B}$ i.e., $\mathcal{B}_r \subset \mathcal{B}$ satisfying the following conditions:
\begin{itemize}
    \item Some states in $\mathcal{B}$ are not present in $\mathcal{B}_r$, i.e., $\exists s' \in \mathcal{B} \land \forall (s, a,s'): (s, a,s') \notin \mathcal{B}_r$
    \item The number of samples $N(s,a,s')$ for some transitions in $\mathcal{B}$ are less in $\mathcal{B}_r$ i.e, $\exists (s,a,s') \in \mathcal{B} : N(s,a,s')_{\mathcal{B}_r} < N(s,a,s')_{\mathcal{B}}$
\end{itemize}
\end{definition}
 We assume that $\mathcal{B}$ contains mostly expert demonstrations. We observe, \textit{performing $Q-Learning$ by sampling from a limited buffer $\mathcal{B}_r$ may not converge to an optimal policy for the MDP $M_\mathcal{B}$ representing the full buffer.} This can be shown as a special case of (Theorem 1,\cite{fujimoto2019off}) as $p_{\mathcal{B}}(s'|s, a) \neq p_{\mathcal{B}_r}(s'|s, a)$ and no Q updates for $(s, a) \notin \mathcal{B}_r$ leading to sub-optimal policy. Please refer to the App. \ref{Aexp} for analysis and example.
 
We also have a given set of common sense rules in the form of domain knowledge. Domain knowledge $\mathcal{D}$ is defined as hierarchical decision nodes capturing observation to action mapping as represented by Eq. \ref{eq2}. Each decision node $T_{\eta_i}$ is represented by a constraint $\phi_{\eta_i}$ and Boolean indicator $\mu_{\eta_i}$ function selects the branch to be traversed based on $\phi_{\eta_i}$. 
\begin{flalign}
\label{eq2}
Action &= \left\{\begin{matrix}
\text{random} \;\;\;\;\;\;\;\;\;\;\;\;\;\;\;\;\;\;\;\;\;\;\; \text{if } s \not\models \mathcal{D} \lor leaf = \emptyset
\\
a_{\eta_i} \;\;\;\;\;\;\;\;\;\;\;\;\;\;\;\;\;\;\;\;\;\;\;\;\;\;\;\;\;\;\;\;\;\;\;\;\;\;\;\;\;\;\;\;\;\;\;\;\;   \text{if}\;\;leaf
\\ 
\mu_{\eta_i} T_{\eta_i \swarrow}(s) + (1-\mu_{\eta_i}) T_{\eta_i \searrow}(s)\;\;\;\;\;\;\;\;\;\; \text{o/w}
\end{matrix}\right. \nonumber
\\
\mu_{\eta_i}(s) &= \left\{\begin{matrix}
1 \;\;\; \text{if }s \models \phi_{\eta_i}
\\ 
0\;\;\;\;\;\;\;\;\;\;\; \text{o/w}
\end{matrix}\right.
\end{flalign}
 We assume that $\mathcal{D}$ gives heuristically reasonable actions for $s \models D$ and $S_{\mathcal{D}} \cap S_{\mathcal{B}_r} \neq \emptyset$ where $S_{\mathcal{D}},S_{\mathcal{B}_r}$ are the state coverage of $\mathcal{D}$ and $\mathcal{B}_r$ . $\mathcal{D}$ can also be partial, i.e., it may not cover the entire state space. 

\textbf{Training Teacher:} An overview of our methodology is depicted in Fig \ref{fig2}. We first construct a trainable actor network $\pi_t^\omega$ parameterized by $\omega$ from $\mathcal{D}$, Fig \ref{fig2} step 1. For training $\pi_t^\omega$ synthetic data $\hat{S}$ is generated by sampling states from a uniform random distribution over state boundaries $B(s)$, $\hat{S} = \mathcal{U}(B(S))$. Note that this does not represent the true state distribution and may have state combinations that will never occur. We train $\pi_t^\omega$ using behavior cloning where state $\hat{s} \sim \hat{S}$ is checked with root decision node in Eq. \ref{eq2}. A random action is chosen if $\hat{s}$ does not satisfy decision node $T_{\eta_0}$ or leaf action is absent. If $\hat{s}$ satisfies a $T_{\eta_i}$, $T_{\eta_i}$ is traversed and action $a_{\eta_i}$ is returned from the leaf node. This is illustrated in Fig \ref{fig2} (a). We term the pre-trained actor network $\pi_t^\omega$ as the teacher policy.

\textbf{Regularizing Critic:} We now introduce Algo \ref{alg:1} to train an offline RL agent on $\mathcal{B}_r$. Algo \ref{alg:1} takes $\mathcal{B}_r$ and pretrained $\pi_t^\omega$ as input. The algorithm uses two hyper-parameters, warm start parameter $k$ and mixing parameter $\lambda$. A critic network $Q_s^\theta$ with Monte-Carlo (MC) dropout and target network $Q_{s}^{\theta'}$ are initialized.  The algorithm for ExID is divided into two phases. In the first phase, we aim to warm start the critic network $Q_s^\theta$ with actions from $\pi_t^\omega$ as shown in Fig \ref{fig2}b( i). However, this must be done selectively as the teacher's policy is random around the states that do not satisfy domain knowledge. In each iteration, we first check the states sampled from a mini-batch of $\mathcal{B}_r$ with $\mathcal{D}$. For the states which satisfy $\mathcal{D}$ we compute the teacher action $\pi_t^\omega(s)$ and critic's action $\mathrm{argmax}_a(Q_s^\theta(s,a))$ and collect it in lists $a_t,a_s$, Algo \ref{alg:1} lines 4-10. Our main objective is to keep actions chosen by the critic network for $s \models \mathcal{D}$ close to the teacher's policy. To achieve this, we introduce a regularization term:
\begin{flalign}
\label{eq3}
\mathcal{L}_r(\theta) = \underbrace{\mathbb{E}_{s \sim \mathcal{B}_r \land s \models \mathcal{D}}}_{\text{states matching domain rule}} \underbrace{[Q_s^\theta(s, a_s)-Q_s^\theta(s,a_t)]^2}_{\text{Q regularizer}}
\end{flalign}
\begin{algorithm}[tb]
   \caption{Pseudo code for EXID}
   \label{alg:1}
\begin{algorithmic}[1]
   \INPUT Reduced buffer $\mathcal{B}_r$, Initial teacher network $\pi_t^\omega$, Training steps N, Warm-up steps $k$, Soft update $\tau$,
   hyperparameters: $\lambda, \alpha$
   \STATE Initialize Critic with MC dropout and Target Critic $Q_s^\theta,  Q_s^{\theta'}$
   \FOR{$n \leftarrow$ 1 {\bfseries to} $N$}
        \STATE Sample mini-batch $b$ of transitions $(s, a, r, s') \sim \mathcal{B}_r$
        $a_t = [], a_s = [], s_r = []$
        \FOR{$s \in b$}
            \IF{$s \models \mathcal{D}$ and $\pi_t^\omega(s) \neq argmax_a(Q_s^\theta(s,a))$}
            \STATE $a_t.append(\pi_t^\omega(s))$
            \STATE $a_s.append(\mathrm{argmax}_a(Q_s^\theta(s,a) ))$
            \STATE $s_r.append(s)$
            \ENDIF
    \ENDFOR
        \IF{$n > k \land $ Cond. \ref{eq6}}
        \STATE Update $\pi_t^\omega(s)$ using Eq \ref{eq7}
        \STATE $\mathcal{L}_r(\theta) = 0$
        \ELSE
        \STATE Calculate $\mathcal{L}_r(\theta)$ using Eq \ref{eq3}
        \ENDIF
        \STATE Calculate $\mathcal{L}(\theta)$ using Eq \ref{eq4}
        \STATE Update $Q_s^\theta$ with $\mathcal{L}(\theta)$ and softy update $Q_s^{\theta'}$  and $\tau$
   \ENDFOR
\end{algorithmic}
\end{algorithm}

Eq \ref{eq3} incentivizes the critic to increase Q values for actions suggested by $\pi_t^\omega$ and decreases Q values for other actions when $\mathrm{argmax}_a(Q_s^\theta(s, a)) \neq \pi_t^\omega(s)$ for states that satisfy domain knowledge. Note that Eq \ref{eq3} will only be 0 when  $\mathrm{argmax}_a(Q_s^\theta(s,a)) = \pi_t^\omega(s)$ for $s \models \mathcal{D}$. It is also set to 0 for $s \not\models \mathcal{D}$. However, since $\pi_t^\omega$ mimicking heuristic rules is sub-optimal, it is also important to incorporate learning from the expert demonstrations. The final loss is a combination of Eq. \ref{eq1} and Eq. \ref{eq3} with a mixing parameter $\lambda \in [0,1]$ represented as follows:

\begin{flalign}
\label{eq4}
\mathcal{L}(\theta) = \mathcal{L}_{cql}(\theta)  + \lambda
\mathbb{E}_{s \sim \mathcal{B}_r \land s \models \mathcal{D}} [Q_s^\theta(s, a_s)-Q_s^\theta(s,a_t)]^2
\end{flalign}

The choice of $\lambda$ and the warm start parameter $k$ depends on the quality of $\mathcal{D}$. In the case of perfect domain knowledge, $\lambda$ would be set to 1, and setting $\lambda$ to 0 would lead to the vanilla CQL loss. Mixing both the losses allows the critic to learn both from the expert demonstrations in $\mathcal{B}_r$ and knowledge in $\mathcal{D}$. 

\textbf{Updating Teacher:} Given a reasonable warm start, the critic is expected to give higher Q values for optimal actions for $s \in \mathcal{D} \cap \mathcal{B}_r$ as it learns from expert demonstrations. We aim to leverage this knowledge to enhance the initial teacher policy $\pi_t^\omega$ trained on heuristic domain knowledge. For $s \sim \mathcal{B}$ and $s \models \mathcal{D}$, we calculate the average Q values over critic actions and teacher actions and check which one is higher in Algo \ref{alg:1} line 11 which refers to Cond. \ref{eq6}. For brevity $\mathbb{E}_{s \sim \mathcal{B}_r \land s \models \mathcal{D}}$ is written as $\mathbb{E}$. If $\mathbb{E}(Q_s^\theta(s,a_s)) > \mathbb{E}(Q_s^\theta(s,a_t))$ it denotes the critic expects a better return on an average over its own policy than the teacher's policy. Hence, we can use the critic's policy to update $\pi_t^\omega$, making it better over $\mathcal{D}$. However, only checking the critic's value can be erroneous as the critic can have high values for OOD actions. We check the average uncertainty of the predicted Q values to prevent the teacher from getting updated by OOD actions. Uncertainty has been shown to be a good metric for OOD action detection by \cite{Wu2021UncertaintyWA, an2021uncertainty}. A well-established methodology to capture uncertainty is predictive variance, which takes inspiration from Bayesian formulation for the critic function and aims to maximize $p(\theta|X, Y) = p(Y|X, \theta)p(\theta)/p(Y|X)$ where $X=(s,a)$ and $Y$ represents the true Q value of the states. However, $p(Y|X)$ is generally intractable and is approximated using Monte Carlo (MC) dropout, which involves including dropout before every layer of the critic network and using it during inference \cite{gal2016theoretically}. Following \cite{Wu2021UncertaintyWA}, we measure the uncertainty of prediction using Eq \ref{eq5}.

\begin{flalign}
\label{eq5}
Var^T[Q(s,a)] \approx \frac{1}{T} \sum_{t=1}^{T} [Q(s,a)-\bar{Q}(s,a)]^2
\end{flalign}

Eq \ref{eq5} estimates the variance of Q value $Q(s, a)$ for an action $a$ using $T$ forward passes on the $Q_s^\theta(s, a)$ with dropout where $\bar{Q}(s, a)$ represents the predictive mean. We check the average uncertainty of the Q value for action chosen by the critic and teacher policy over the states that match domain knowledge in a batch. The teacher network is updated using the critic's action only when the policy expects a higher average Q return on its action and the average uncertainty of taking this action is lower than the teacher action. $\mathbb{E}(Var^T Q_s^\theta(s_r,a_s)) < \mathbb{E}(Var^T Q_s^\theta(s_r,a_t))$ indicates the actions were learned from the expert data in the buffer and are not OOD samples. The condition is summarized in cond. \ref{eq6}:
\begin{flalign}
\label{eq6}
\mathbb{E}(Q_s^\theta(s_r,a_s)) > \mathbb{E}(Q_s^\theta(s_r,a_t)) \land \nonumber
\\
\mathbb{E}(Var^T Q_s^\theta(s_r,a_s)) < \mathbb{E}(Var^T Q_s^\theta(s_r,a_t)) 
\end{flalign}
We update the teacher with cross-entropy described in Eq \ref{eq7}:
\begin{flalign}
\label{eq7}
\mathcal{L}(\omega) = -\sum_{s \models D} (\pi_t^\omega(s)log(\pi_s(s)))
\end{flalign}
where, $\pi_s(s,a) = \frac{e^{Q(s,a)}}{\sum_{a'} Q(s,a')}$. When the critic's policy is better than the teacher's policy, $\mathcal{L}_r(\theta)$ is set to 0 Algo \ref{alg:1} Lines 11 to 13. Finally, the critic network is updated using calculated loss $\mathcal{L}(\theta)$ Algo \ref{alg:1} Lines 17-18. In proposition \ref{prop1}, we show Algo \ref{alg:1} reduces generalization error given reasonable domain knowledge.
\begin{proposition}
\label{prop1}
Algo \ref{alg:1} reduces generalization error if $Q^*(s,\pi_t^\omega(s)) > Q^*(s,\pi(s))$ for $s \in \mathcal{D} \cap \mathcal{B}_r$, where $\pi$ is vanilla offline RL policy learnt on $\mathcal{B}_r$.
\end{proposition}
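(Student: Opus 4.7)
The plan is to formalize the generalization error as $\epsilon(\pi) = \mathbb{E}_{s_0 \sim d_0}[V^*(s_0) - V^\pi(s_0)]$ evaluated on the MDP $M_{\mathcal{B}}$ induced by the full buffer, and to establish $\epsilon(\hat{\pi}) \le \epsilon(\pi)$ for the ExID policy $\hat{\pi}$ by decomposing state-wise across $\mathcal{D} \cap \mathcal{B}_r$ and its complement, then aggregating through the performance-difference lemma.

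First, I would invoke the observation after Def.~\ref{def:1} (the special case of Thm.\ 1 of \cite{fujimoto2019off}) to characterise $\pi$: since no Bellman updates occur for $(s,a) \notin \mathcal{B}_r$ and $p_{\mathcal{B}_r}(s'|s,a) \neq p_{\mathcal{B}}(s'|s,a)$, the vanilla policy $\pi$ is strictly sub-optimal on a non-empty subset of $\mathcal{D} \cap \mathcal{B}_r$, and the hypothesis supplies a strictly positive per-state margin $\Delta(s) := Q^*(s,\pi_t^\omega(s)) - Q^*(s,\pi(s))$ on that subset.

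Next, I would analyse the critic at the end of the warm-up phase ($n \le k$). The regulariser in Eq.~\ref{eq3} drives $Q_s^\theta(s,a_s)$ toward $Q_s^\theta(s,a_t)$ whenever $\mathrm{argmax}_a Q_s^\theta(s,a) \neq \pi_t^\omega(s)$ on $s \models \mathcal{D}$, so at a stationary point of Eq.~\ref{eq4} with a sufficiently dominant $\lambda$ the learned policy satisfies $\hat{\pi}(s) = \pi_t^\omega(s)$ on $\mathcal{D} \cap \mathcal{B}_r$; outside $\mathcal{D}$ the regulariser vanishes and the objective reduces to CQL, so $\hat{\pi}$ performs no worse than $\pi$ there. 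For the teacher-update phase ($n > k$), Cond.~\ref{eq6} triggers Eq.~\ref{eq7} only when the critic's action $a_s$ simultaneously attains a higher mean Q and a lower MC-dropout predictive variance than the teacher's action $a_t$. Under the Bayesian interpretation of dropout \cite{gal2016theoretically, Wu2021UncertaintyWA}, the low-variance condition localises $a_s$ to the in-distribution support and rules out OOD over-estimation, so each teacher refinement via Eq.~\ref{eq7} preserves (and can only enlarge) $\Delta(s)$. Aggregating via the performance-difference lemma then yields $\epsilon(\pi) - \epsilon(\hat{\pi}) \ge (1-\gamma)^{-1}\,\mathbb{E}_{s \sim d^{\hat{\pi}}}[\mathbf{1}\{s \in \mathcal{D} \cap \mathcal{B}_r\}\,\Delta(s)] > 0$.

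The chief obstacle will be the fixed-point identification in the third step: for generic $\lambda \in (0,1)$ the CQL and regulariser gradients only cancel up to an $O((1-\lambda)/\lambda)$ residual in the Q-values, so the cleanest version of the argument goes through either in the limit $\lambda \to 1$ or under the auxiliary assumption that the teacher margin $\Delta(s)$ dominates that residual. A secondary technicality will be controlling the shift in the state-visitation distribution $d^{\hat{\pi}}$ induced by the biased Q-estimates, which I would handle with a standard total-variation perturbation bound before collapsing the statement to the per-state inequality above.
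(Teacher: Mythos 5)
Your proposal follows essentially the same route as the paper's proof: both express the generalization error via the performance-difference identity $G_\pi = -\mathbb{E}_{\tau\sim\pi}[\sum_t \gamma^t (Q^*(s_t,\pi(s_t)) - V^*(s_t))]$, split the resulting state integral into the part covered by $\mathcal{D}$ and its complement, use the hypothesis $Q^*(s,\pi_t^\omega(s)) > Q^*(s,\pi(s))$ to improve the $\mathcal{D}$-part, and assume the complement contribution is unchanged. You are in fact more explicit than the paper about the two weak points (the stationary-point identification for generic $\lambda$ and the shift in the visitation distribution $P(s_t = s\mid\hat{\pi})$ versus $P(s_t = s\mid\pi)$), which the paper's proof simply asserts away via $f(s\mid\hat{\pi}) \approx f(s\mid\pi)$ and by comparing the two integrals under different visitation measures without justification.
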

\textit{Proof is deferred to App. \ref{propproof}}. In the next section, we discuss our empirical evaluations.

\begin{figure*}[!t]
\begin{center}
\includegraphics[width=\textwidth]{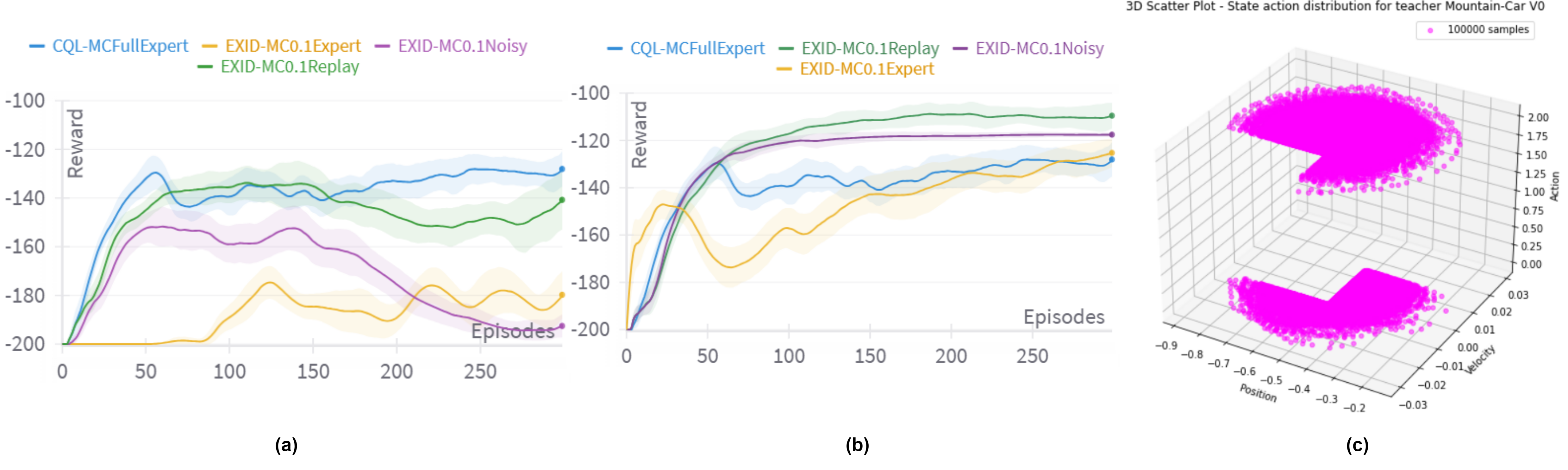}
\end{center}
\caption{Performance of (a) CQL and (b) EXID on all datasets for Mountain Car during online evaluation  (c) State action pairs used for training teacher network for Mountain Car}
\label{fig3}
\end{figure*}

\begin{table*}[]
\label{tab:my-table}
\begin{center}
\caption{Average reward [$\uparrow$] obtained during online evaluation over 3 seeds on openAI gym environments}
\begin{small}
\begin{sc}
\resizebox{\textwidth}{!}{%
\begin{tabular}{ccccccccccccc}
\hline
\begin{tabular}[c]{@{}c@{}}Env\\ Data\end{tabular} &
  \begin{tabular}[c]{@{}c@{}}Data\\ Type\end{tabular} &
  $\mathcal{D}$ &
  \begin{tabular}[c]{@{}c@{}}QRDQN\\ D\end{tabular} &
  \begin{tabular}[c]{@{}c@{}}REM\\ D\end{tabular} &
  \begin{tabular}[c]{@{}c@{}}BVE\\ D\end{tabular} &
  \begin{tabular}[c]{@{}c@{}}CRR\\ D\end{tabular} &
  \begin{tabular}[c]{@{}c@{}}MCE\\ D\end{tabular} &
  \begin{tabular}[c]{@{}c@{}}BC\\ D\end{tabular} &
  \begin{tabular}[c]{@{}c@{}}BCQ\\ D\end{tabular} &
  \begin{tabular}[c]{@{}c@{}}CQL\\ D\end{tabular} &
  \begin{tabular}[c]{@{}c@{}}CQL \\ (Full)\end{tabular} &
  \begin{tabular}[c]{@{}c@{}}\textbf{ExID}\\ \textbf{(ours)}\end{tabular} \\ \hline
  \multirow{3}{*}[-3em]{\begin{tabular}[c]{@{}c@{}}Mountain\\ Car\end{tabular}} &
  Expert &
  \multirow{3}{*}[-3em]{\begin{tabular}[c]{@{}c@{}}-159.9\\ $\pm$\\ 52.28\end{tabular}} &
  \begin{tabular}[c]{@{}c@{}}-168.2\\ $\pm$\\ 33.71\end{tabular} &
  \begin{tabular}[c]{@{}c@{}}-147.7\\ $\pm$\\ 21.54\end{tabular} &
  \begin{tabular}[c]{@{}c@{}}-175.36\\ $\pm$\\ 25.16\end{tabular} &
  \begin{tabular}[c]{@{}c@{}}-157.2\\ $\pm$\\ 39.09\end{tabular} &
  \begin{tabular}[c]{@{}c@{}}-152\\ $\pm$\\ 37.41\end{tabular} &
  \begin{tabular}[c]{@{}c@{}}-181.38\\ $\pm$ \\ 28.60\end{tabular} &
  \begin{tabular}[c]{@{}c@{}}-172.9\\ $\pm$\\ 27.5\end{tabular} &
  \begin{tabular}[c]{@{}c@{}}-167.49\\ $\pm$ \\ 12.3\end{tabular} &
  \begin{tabular}[c]{@{}c@{}}-128.63\\ $\pm$\\ 10.94\end{tabular} &
  \begin{tabular}[c]{@{}c@{}}-125.5\\ $\pm$\\ 2.60\end{tabular} \\ \cline{2-2} \cline{4-13} 
 &
  Replay &
   &
  \begin{tabular}[c]{@{}c@{}}-137.14\\ $\pm$\\ 39.27\end{tabular} &
  \begin{tabular}[c]{@{}c@{}}-136.26\\ $\pm$\\ 40.15\end{tabular} &
  \begin{tabular}[c]{@{}c@{}}-152.0\\ $\pm$\\ 35.06\end{tabular} &
  \begin{tabular}[c]{@{}c@{}}-137.23\\ $\pm$\\ 42.79\end{tabular} &
  \begin{tabular}[c]{@{}c@{}}-139.91\\ $\pm$\\ 40.01\end{tabular} &
  \begin{tabular}[c]{@{}c@{}}-137.26\\ $\pm$\\ 43.04\end{tabular} &
  \begin{tabular}[c]{@{}c@{}}-136.29\\ $\pm$\\ 36.15\end{tabular} &
  \begin{tabular}[c]{@{}c@{}}-140.38\\ $\pm$\\ 33.58\end{tabular} &
  \begin{tabular}[c]{@{}c@{}}-135.4\\ $\pm$\\ 3.74\end{tabular} &
  \begin{tabular}[c]{@{}c@{}}-105.79\\ $\pm$\\ 11.38\end{tabular} \\ \cline{2-2} \cline{4-13} 
 &
  Noisy &
   &
  \begin{tabular}[c]{@{}c@{}}-141.61\\ $\pm$\\ 33.04\end{tabular} &
  \begin{tabular}[c]{@{}c@{}}-134.99\\ $\pm$\\ 32.60\end{tabular} &
  \begin{tabular}[c]{@{}c@{}}-173.95\\ $\pm$\\ 39.60\end{tabular} &
  \begin{tabular}[c]{@{}c@{}}-178.99\\ $\pm$\\ 23.58\end{tabular} &
  \begin{tabular}[c]{@{}c@{}}-168.69\\ $\pm$\\ 38.78\end{tabular} &
  \begin{tabular}[c]{@{}c@{}}-140.0\\ $\pm$\\ 28.5\end{tabular} &
  \begin{tabular}[c]{@{}c@{}}-144.52\\ $\pm$\\ 43.04\end{tabular} &
  \begin{tabular}[c]{@{}c@{}}-179.8\\ $\pm$\\ 29.99\end{tabular} &
  \begin{tabular}[c]{@{}c@{}}-107.06\\ $\pm$\\ 12.73\end{tabular} &
  \begin{tabular}[c]{@{}c@{}}-109.9\\ $\pm$\\ 13.45\end{tabular} \\ \hline
\multirow{3}{*}[-3em]{\begin{tabular}[c]{@{}c@{}}Cart\\ Pole\end{tabular}} &
  Expert &
  \multirow{3}{*}[-3em]{\begin{tabular}[c]{@{}c@{}}57.0\\ $\pm$\\ 5.35\end{tabular}} &
  \begin{tabular}[c]{@{}c@{}}33.23\\ $\pm$\\ 3.17\end{tabular} &
  \begin{tabular}[c]{@{}c@{}}41.31\\ $\pm$\\ 8.76\end{tabular} &
  \begin{tabular}[c]{@{}c@{}}16.16\\ $\pm$\\ 9.41\end{tabular} &
  \begin{tabular}[c]{@{}c@{}}15.24\\ $\pm$\\ 5.62\end{tabular} &
  \begin{tabular}[c]{@{}c@{}}16.1\\ $\pm$\\ 4.4\end{tabular} &
  \begin{tabular}[c]{@{}c@{}}225.76\\ $\pm$ \\ 74.39\end{tabular} &
  \begin{tabular}[c]{@{}c@{}}165.36\\ $\pm$ \\ 15.01\end{tabular} &
  \begin{tabular}[c]{@{}c@{}}121.8\\ $\pm$\\ 14.0\end{tabular} &
  \begin{tabular}[c]{@{}c@{}}364.1\\ $\pm$\\ 22.15\end{tabular} &
  \begin{tabular}[c]{@{}c@{}}307.18\\ $\pm$ \\ 137.72\end{tabular} \\ \cline{2-2} \cline{4-13} 
 &
  \multicolumn{1}{l}{Replay} &
   &
  \begin{tabular}[c]{@{}c@{}}149.09\\ $\pm$\\ 14.05\end{tabular} &
  \begin{tabular}[c]{@{}c@{}}180.70\\ $\pm$\\ 62.79\end{tabular} &
  \begin{tabular}[c]{@{}c@{}}11.1\\ $\pm$\\ 2.13\end{tabular} &
  \begin{tabular}[c]{@{}c@{}}11.24\\ $\pm$\\ 2.71\end{tabular} &
  \begin{tabular}[c]{@{}c@{}}9.16\\ $\pm$\\ 0.25\end{tabular} &
  \begin{tabular}[c]{@{}c@{}}144.43\\ $\pm$\\ 2.41\end{tabular} &
  \begin{tabular}[c]{@{}c@{}}144.76\\ $\pm$\\ 6.04\end{tabular} &
  \begin{tabular}[c]{@{}c@{}}131.97\\ $\pm$\\ 23.23\end{tabular} &
  \begin{tabular}[c]{@{}c@{}}250.02\\ $\pm$\\ 55.02\end{tabular} &
  \begin{tabular}[c]{@{}c@{}}340.26 \\ $\pm$\\ 30.58\end{tabular} \\ \cline{2-2} \cline{4-13} 
 &
  \multicolumn{1}{l}{Noisy} &
   &
  \begin{tabular}[c]{@{}c@{}}161\\ $\pm$\\ 6.40\end{tabular} &
  \begin{tabular}[c]{@{}c@{}}15.33\\ $\pm$\\ 0.58\end{tabular} &
  \begin{tabular}[c]{@{}c@{}}11.53\\ $\pm$\\ 3.77\end{tabular} &
  \begin{tabular}[c]{@{}c@{}}13.68\\ $\pm$\\ 7.49\end{tabular} &
  \begin{tabular}[c]{@{}c@{}}10.66\\ $\pm$\\ 2.04\end{tabular} &
  \begin{tabular}[c]{@{}c@{}}68.4\\ $\pm$\\ 14.67\end{tabular} &
  \begin{tabular}[c]{@{}c@{}}63.53\\ $\pm$\\ 14.08\end{tabular} &
  \begin{tabular}[c]{@{}c@{}}92.6\\ $\pm$\\ 22.05\end{tabular} &
  \begin{tabular}[c]{@{}c@{}}180.4\\ $\pm$\\ 64.22\end{tabular} &
  \begin{tabular}[c]{@{}c@{}}228.61\\ $\pm$\\ 38.64\end{tabular} \\ \hline
\multirow{3}{*}[-3em]{\begin{tabular}[c]{@{}c@{}}Lunar\\ Lander\end{tabular}} &
  Expert &
  \multirow{3}{*}[-3em]{\begin{tabular}[c]{@{}c@{}}52.48\\ $\pm$\\ 26.51\end{tabular}} &
  \begin{tabular}[c]{@{}c@{}}5.14\\ $\pm$\\ 25.10\end{tabular} &
  \begin{tabular}[c]{@{}c@{}}-184.84\\ $\pm$\\ 26.45\end{tabular} &
  \begin{tabular}[c]{@{}c@{}}-681.67\\ $\pm$\\ 34.86\end{tabular} &
  \begin{tabular}[c]{@{}c@{}}8.79\\ $\pm$\\ 25.38\end{tabular} &
  \begin{tabular}[c]{@{}c@{}}19.71\\ $\pm$\\ 10.52\end{tabular} &
  \begin{tabular}[c]{@{}c@{}}38.40\\ $\pm$\\ 23.21\end{tabular} &
  \begin{tabular}[c]{@{}c@{}}-45.99\\ $\pm$\\ 30.47\end{tabular} &
  \begin{tabular}[c]{@{}c@{}}65.43\\ $\pm$\\ 71.37\end{tabular} &
  \begin{tabular}[c]{@{}c@{}}167.74\\ $\pm$\\ 29.4\end{tabular} &
  \begin{tabular}[c]{@{}c@{}}161.34\\ $\pm$ \\ 17.10\end{tabular} \\ \cline{2-2} \cline{4-13} 
 &
  Replay &
   &
  \begin{tabular}[c]{@{}c@{}}-444.20\\ $\pm$\\ 12.20\end{tabular} &
  \begin{tabular}[c]{@{}c@{}}-556.81\\ $\pm$\\ 21.39\end{tabular} &
  \begin{tabular}[c]{@{}c@{}}-572\\ $\pm$\\ 27.93\end{tabular} &
  \begin{tabular}[c]{@{}c@{}}-131.21\\ $\pm$\\ 31.97\end{tabular} &
  \begin{tabular}[c]{@{}c@{}}-115.23\\ $\pm$\\ 18.16\end{tabular} &
  \begin{tabular}[c]{@{}c@{}}136.63\\ $\pm$\\ 12.40\end{tabular} &
  \begin{tabular}[c]{@{}c@{}}111.47\\ $\pm$\\ 54.67\end{tabular} &
  \begin{tabular}[c]{@{}c@{}}61.83\\ $\pm$\\ 45.57\end{tabular} &
  \begin{tabular}[c]{@{}c@{}}187.72\\ $\pm$\\ 25.62\end{tabular} &
  \begin{tabular}[c]{@{}c@{}}156.03\\ $\pm$\\ 56.67\end{tabular} \\ \cline{2-2} \cline{4-13} 
 &
  Noisy &
   &
  \begin{tabular}[c]{@{}c@{}}-4.81\\ $\pm$\\ 97.28\end{tabular} &
  \begin{tabular}[c]{@{}c@{}}21.41\\ $\pm$\\ 14.71\end{tabular} &
  \begin{tabular}[c]{@{}c@{}}28.65\\ $\pm$\\ 12.26\end{tabular} &
  \begin{tabular}[c]{@{}c@{}}-158.27\\ $\pm$\\ 7.71\end{tabular} &
  \begin{tabular}[c]{@{}c@{}}-50.47\\ $\pm$\\ 15.78\end{tabular} &
  \begin{tabular}[c]{@{}c@{}}98.62\\ $\pm$\\ 28.01\end{tabular} &
  \begin{tabular}[c]{@{}c@{}}101.59\\ $\pm$\\ 30.83\end{tabular} &
  \begin{tabular}[c]{@{}c@{}}5.01\\ $\pm$\\ 128.63\end{tabular} &
  \begin{tabular}[c]{@{}c@{}}111\\ $\pm$\\ 52.32\end{tabular} &
  \begin{tabular}[c]{@{}c@{}}163.57\\ $\pm$\\ 49.24\end{tabular} \\ \hline
\end{tabular}%
}
\end{sc}
\end{small}
\end{center}
\end{table*}

\section{Empirical Evaluations}
 We investigate the following through our empirical evaluations:
 \emph{1. Does ExID perform better than offline RL algorithms on different environments with datasets exhibiting rare and OOD states Sec \ref{perform}? 2. Does ExID generalize to OOD states covered by $\mathcal{D}$ Sec \ref{generalize}? 3. What is the effect of varying $k$, $\lambda$ and updating $\pi_t^\omega$ Sec \ref{varying}? 4. How does performance vary with the quality of $\mathcal{D}$ Sec \ref{rule}?}

 \subsection{Experimental Setting}

We evaluate our methodology on open-AI gym \cite{brockman2016openai} and MiniGrid \cite{MinigridMiniworld23} discrete environment offline data sets. All our data sets are generated using standard methodologies defined in \cite{schweighofer2022dataset,schweighofer2021understanding}. All experiments have been conducted on a Ubuntu 22.04.2 LTS system with 1 NVIDIA K80 GPU, 4 CPUs, and 61GiB RAM. App. \ref{hyperparameters} notes the hyperparameter values and network architectures. 

\textbf{Dataset:} We experiment on three types of data sets. \textit{Expert Data-set} \cite{fu2020d4rl, gulcehre2021regularized, kumar2020conservative} generated using an optimal policy without any exploration with high trajectory quality but low state action coverage. \textit{Replay Data-set} \cite{agarwal2020optimistic, fujimoto2019off}  generated from a policy while training it online, exhibiting a mixture of multiple behavioral policies with high trajectory quality and state action coverage. \textit{Noisy Data-set} \cite{fujimoto2019benchmarking, fujimoto2019off, kumar2020conservative, gulcehre2021regularized}  generated using an optimal policy that also selects random actions with $\epsilon$ greedy strategy where $\epsilon = 0.2$ having low trajectory quality and high state action coverage.

\textbf{Baselines:} We do comparative studies on 9 baselines. The first baseline simply checks the conditions of $\mathcal{D}$ and applies corresponding actions in execution. The performance of this baseline shows that rules are imperfect and do not achieve the optimal reward. For other baselines, we train eight algorithms popular in the Ofﬂine RL literature for discrete environments on the reduced buffer. These algorithms include Behavior Cloning (BC) \cite{pomerleau1991efficient},  Behaviour Value Estimation (BVE) \cite{gulcehre2021regularized}, Quantile Regression DQN (QRDQN) \cite{dabney2018distributional}, REM, Monte Carlo Estimation (MCE), BCQ, CQL and Critic Regularized Regression Q-Learning (CRR) \cite{wang2020critic}. For a fair comparison, we use actions from domain knowledge for states not in the buffer and actions from the trained policy for other states to obtain the final reward. Hence, each algorithm is renamed with the suffix D in Table \ref{tab:my-table}.

\textbf{Limiting Data:} To create limited-data settings for offline RL algorithms, we first extract a small percentage of samples from the full dataset and remove some of the samples based on state conditions. This is done to ensure the reduced buffer satisfies the conditions defined in Def \ref{def:1}. These removed states come under the coverage of domain knowledge. We describe the specific conditions of removal in the next section. Further insights and the state visualizations for selected reduced datasets are in App \ref{visualization}.

\subsection{Performance across Different Environments}
\label{perform}

Our results for OpenAI gym environments are summarised in Table \ref{tab:my-table} and Minigrid in Table \ref{minigrid}.  We observe the performance of offline RL algorithms degrades substantially when part of the data is not seen and trajectory ratios change. For these cases with only 10\% partial data, ExID surpasses the performance by at least 27\% in the presence of reasonable domain knowledge. The proposed method performs strongest on the replay dataset where the contribution of $L_r(\theta)$ is significant due to state coverage, and the teacher learns from high-quality trajectories. Environment details are described in the App. \ref{envdescp}. All domain knowledge trees are shown in the App. \ref{envdescp} Fig \ref{domknow}. We describe limiting data conditions and domain knowledge specific to the environment as follows:

\textbf{Mountain Car Environment:} \cite{moore1990efficient} We use simple, intuitive domain knowledge in this environment shown in the App. \ref{envdescp} Fig \ref{domknow} (c), which represents taking a left action when the car is at the bottom of the valley with low velocity to gain momentum; otherwise, taking the right action to drive the car up. Fig \ref{fig3} (c) shows the state action pairs this rule generates on states sampled from a uniform random distribution over the state boundaries. It can be observed that the states of $\mathcal{D}$ cover part of the missing data in Fig \ref{fig1} (a). For limiting datasets, we remove states with position $>$ -0.8. The performance of CQL and ExID are shown in Fig \ref{fig3} (a),(b) where ExID surpasses CQL for all three datasets.

\textbf{Cart-pole Environment:} For this environment, we use domain knowledge from \cite{silva2021encoding}, which aims to move in the direction opposite to the lean of the pole, keeping the cart close enough to the center. If the cart is close to an edge, the domain knowledge attempts to account for the cart’s velocity and recenter the cart. The full tree is given in the App. \ref{envdescp} Fig \ref{domknow} (a). We remove states with cart velocity $>$ -1.5 to create the reduced buffer.

\textbf{Lunar-Lander Environment:} We borrow the decision nodes from \cite{silva2020optimization} and get actions from a sub-optimal policy trained online with an average reward of 52.48. The full set of decision nodes is shown in the App. \ref{envdescp} Fig \ref{domknow} (b). $\mathcal{D}$ focuses on keeping the lander balanced when the lander is above ground. When the lander is near the surface, $\mathcal{D}$ focuses on keeping the y velocity lower. To create the reduced datasets, we remove data of lander angle $<$ -0.04.

\begin{table}[!h]
\centering
\caption{Average reward [$\uparrow$] obtained during online evaluation over 3 seeds on Minigrid environments}
\begin{small}
\begin{sc}
\label{minigrid}
\begin{tblr}{
  width = \linewidth,
  colspec = {Q[221]Q[142]Q[142]Q[142]Q[142]Q[142]},
  cells = {c},
  hline{1,4} = {-}{0.08em},
  hline{3} = {-}{},
}
Environment & $\mathcal{D}$ & {BC\\D} & {BCQ\\D} & {CQL\\D} & \textbf{ExID}\\
{MiniGrid\\Dynamic\\Random6x6} & {0.50\\$\pm$\\0.08} & {0.59\\$\pm$\\0.07} & {0.24\\$\pm$\\0.22} & {0.14\\$\pm$\\0.1} & {0.79\\$\pm$\\0.07}\\
{MiniGrid\\LavaGapS\\7X7} & {0.27\\$\pm$\\0.09} & {0.29\\$\pm$\\0.11} & {0.26\\$\pm$\\0.1} & {0.28\\$\pm$\\0.12} & {0.46\\$\pm$\\0.13}
\end{tblr}
\end{sc}
\end{small}
\vspace{-0.1cm}
\end{table}

\textbf{Mini-Grid Environments:} For our experiments, we choose two environments: Random Dynamic Obstacles 6X6 and LavaGapS 7X7. We use intuitive domain knowledge which avoids crashing into obstacles in front, left, or right of agent ref. App. \ref{envdescp} Fig \ref{domknow} (d), (e). Since this environment uses a semantic map from image observation, we collect states from a fixed policy with random actions to generate the teacher's state distribution. We remove states with obstacles on the right for creating limited data settings. Due to the space restriction, we only report the results of the best-performing algorithms on the replay dataset in Table \ref{minigrid}. Similar trends follow for expert and noisy datasets. CQL on the full dataset achieves the average reward of $0.92 \pm 0.1$ for DynamicObstacles and $0.53 \pm 0.01$ for LavaGapS.

\subsection{Generalization to OOD states and contribution of $\mathcal{L}_r(\theta)$}
\label{generalize}

\begin{figure}[!hb]
\begin{center}
\centerline{\includegraphics[width=\columnwidth]{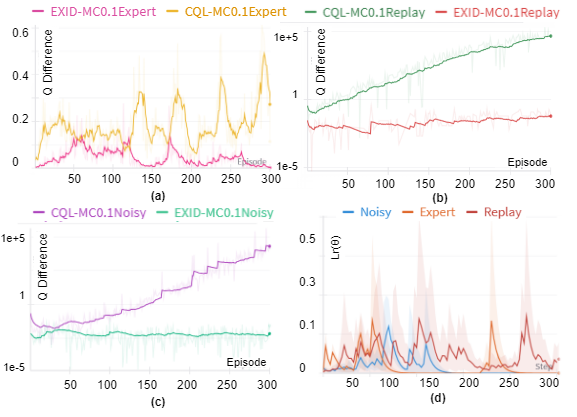}}
\caption{Q value difference between CQL and EXID for expert and policy action on states not present in the buffer for a) expert b) replay in log scale c) noisy in log scale d) contribution of $\mathcal{L}_r(\theta)$}
\label{fig4}
\end{center}
\vskip -0.2in
\end{figure}

\begin{figure*}[!t]
\begin{center}
\includegraphics[width=\textwidth]{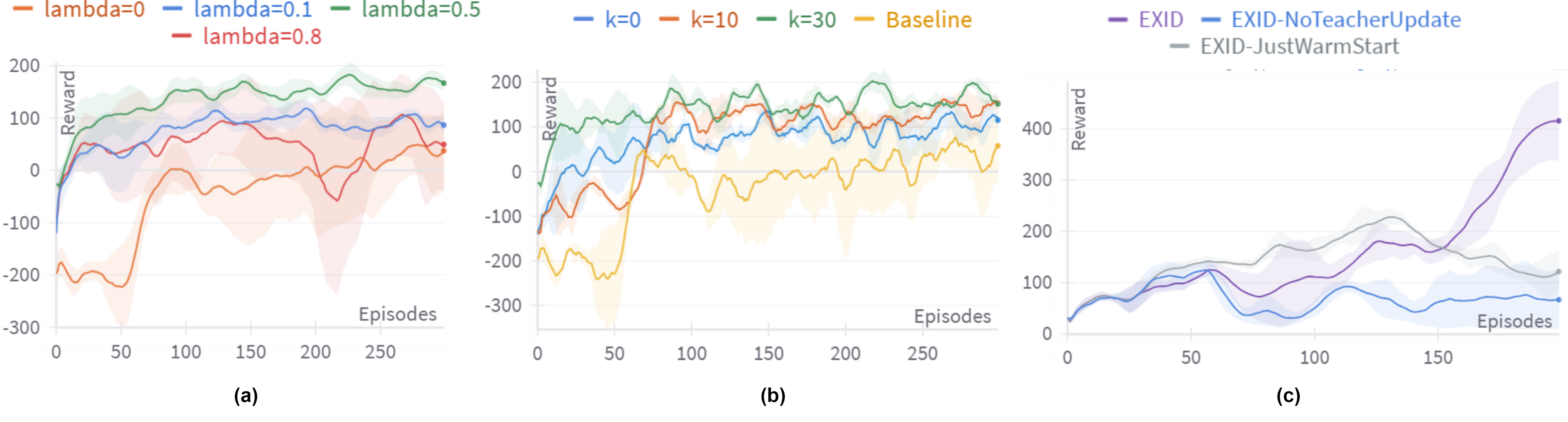}
\end{center}
\caption{(a) Effect of different $\lambda$ on the performance of EXID on Lunar Lander (b) Effect of different $k$ on the performance of EXID on Lunar Lander (c)  Performance of EXID with teacher update, no teacher update, and just warm start on Cart-pole.}
\label{fig5}
\end{figure*}

In Fig \ref{fig4} (a), (b), (c), we plot $Q_s^\theta(s, a_{expert}) - Q_s^\theta(s, a_{\theta})$ for CQL and EXID policies for different datasets of Mountain-Car environments. Action $a_{expert}$ is obtained from the full expert dataset where position $> -0.8$. We observe that the Q value for actions of CQL policy diverges from the expert policy actions with high values for the states not in the reduced buffer, whereas ExID stays close to the expert actions for the unseen states. This empirically shows generalization to OOD states not in the dataset but covered by domain knowledge. In Fig \ref{fig4} (d), we plot the contribution by $\mathcal{L}_r(\theta)$ during the training and observe the contribution is higher for replay data sets with more state coverage.

\subsection{Performance on varying $\lambda$, $k$, and ablation of $\pi_t^\omega$}
\label{varying}

We study the effect of varying $\lambda$ on the algorithm for the given domain knowledge. We empirically observe setting a high or a low $\lambda$ can yield sub-optimal performance, and $\lambda=0.5$ generally gives good performance. In Fig \ref{fig5} (a), we show this effect for LunarLander. Plots for other environments are in the App. \ref{lamda} Fig \ref{figeffectlambda}. For $k$ we observe setting the warm start parameter to 0 yields a sub-optimal policy, as the critic may update $\pi_t^\omega$ without completely learning from it. The starting performance increases with an increase in $k$ as shown in Fig \ref{fig5} (b) for LunarLander. $k=30$ works best according to empirical evaluations. Plots for other environments are in the App. \ref{lamda} Fig \ref{figeffectk}. We show two ablations for Cart-pole in Fig \ref{fig5} (c) with no teacher update after the warm start and no inclusion of $\mathcal{L}_r(\theta)$ after the warm start. The warm start in this environment is set to 30 episodes. Fig \ref{fig5} c) shows that if the teacher is not updated, the sub-optimal teacher drags down the performance of the policy beyond the warm start, exhibiting the necessity of $\pi_t^\omega$ update. Also, the student converges to a sub-optimal policy if no $\mathcal{L}_r(\theta)$ is included beyond the warm start. 

\subsection{Effect of varying $\mathcal{D}$ quality} 
\label{rule}

\begin{figure}[ht]
\begin{center}
\centerline{\includegraphics[width=\columnwidth]{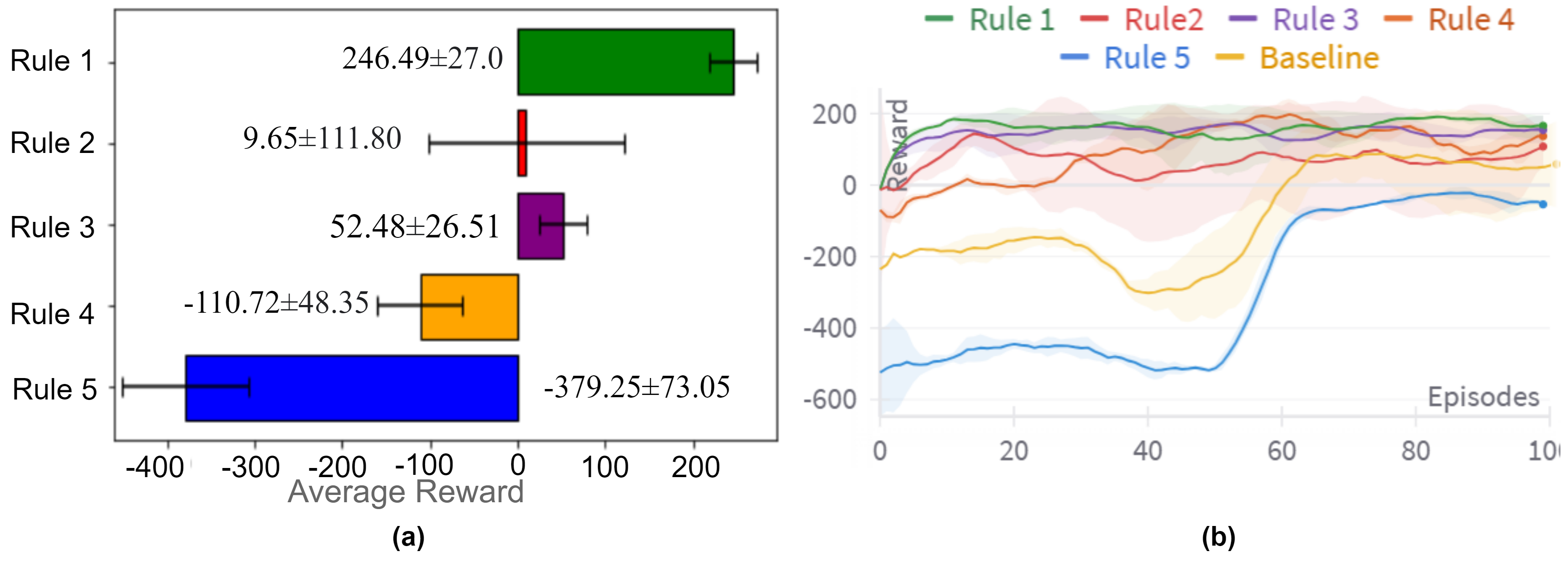}}
\caption{(a) $\mathcal{D}$ with different average rewards (b) Performance effect on Lunar-lander.}
\label{fig6}
\end{center}
\vskip -0.5in
\end{figure}

We show the effect of choosing policies as $\mathcal{D}$ with different average rewards for Lunar-Lander expert data in Fig \ref{fig6} (a) and (b). Rule 1 is optimal and has almost the same effect as Rule 3, which is the $\mathcal{D}$ used in our experiments exhibiting that updating a sub-optimal $\mathcal{D}$  can lead to equivalent performance as optimal $\mathcal{D}$. Using a rule with high uncertainty, as Rule 2, induces high uncertainty in the learned policy but performs slightly better than the baseline. Rule 4, which has a lower average reward, also causes gains on average performance with slower convergence. Finally, Rule 5, with very bad actions, affects policy performance adversely and leads to a performance lower than baseline CQL.

\section{Conclusion}

In this paper, we study the effect of limited and partial data on offline RL and observe that the performance of SOTA offline RL algorithms is sub-optimal in such settings. The paper proposes a methodology to handle offline RL's performance degradation using domain insights. We incorporate a regularization loss in the CQL training using a teacher policy and refine the initial teacher policy while training. Empirically, we show that incorporating reasonable domain knowledge in offline RL enhances performance, achieving a performance close to full data. However, this method is limited by the quality of the domain knowledge and the overlap between domain knowledge states and reduced buffer data. The study is also limited to discrete domains. In the future, the authors would like to improve on capturing domain knowledge into the policy network without dependence on data and extending the methodology to algorithms that handle continuous action space. 

\nocite{langley00}
\balance
\bibliography{example_paper}
\bibliographystyle{icml2023}

\newpage
\appendix
\onecolumn
\section{Missing Examples and Proofs}
\label{Aexp}
\textit{Performing $Q-Learning$ by sampling from a reduced batch $\mathcal{B}_r$ may not converge to an optimal policy for the MDP $M_\mathcal{B}$ representing the full buffer.}
 
\textbf{Example} (Theorem 1,\cite{fujimoto2019off}) defines MDP $M_\mathcal{B}$ of $\mathcal{B}$ from same state action space of the original MDP $M$ with transition probabilities $p_{\mathcal{B}}(s'|s,a) = \frac{N(s,a,s')}{\sum_{\tilde{s}}N(s,a,\tilde{s})}$ where $N(s,a,s')$ is the number of times $(s,a,s')$ occurs in $\mathcal{B}$ and an terminal state $s_{init}$. It states $p_{\mathcal{B}}(s_{init}|s,a) = 1$ when $\sum_{\tilde{s}}N(s,a,\tilde{s}) = 0$. This happens when transitions of some $s'$ of $(s,a,s')$ are missing from the buffer, which may occur in $\mathcal{B}_r$ when $\mathcal{B}_r \subset \mathcal{B}$. $r(s_{init},s,a)$ is initialized to $Q(s,a)$. We assume that a policy learned on reduced dataset $\mathcal{B}_r$ converges to optimal value function and disprove it using the following counterexample:

\begin{figure*}[!h]
\begin{center}
\includegraphics[width=\textwidth]{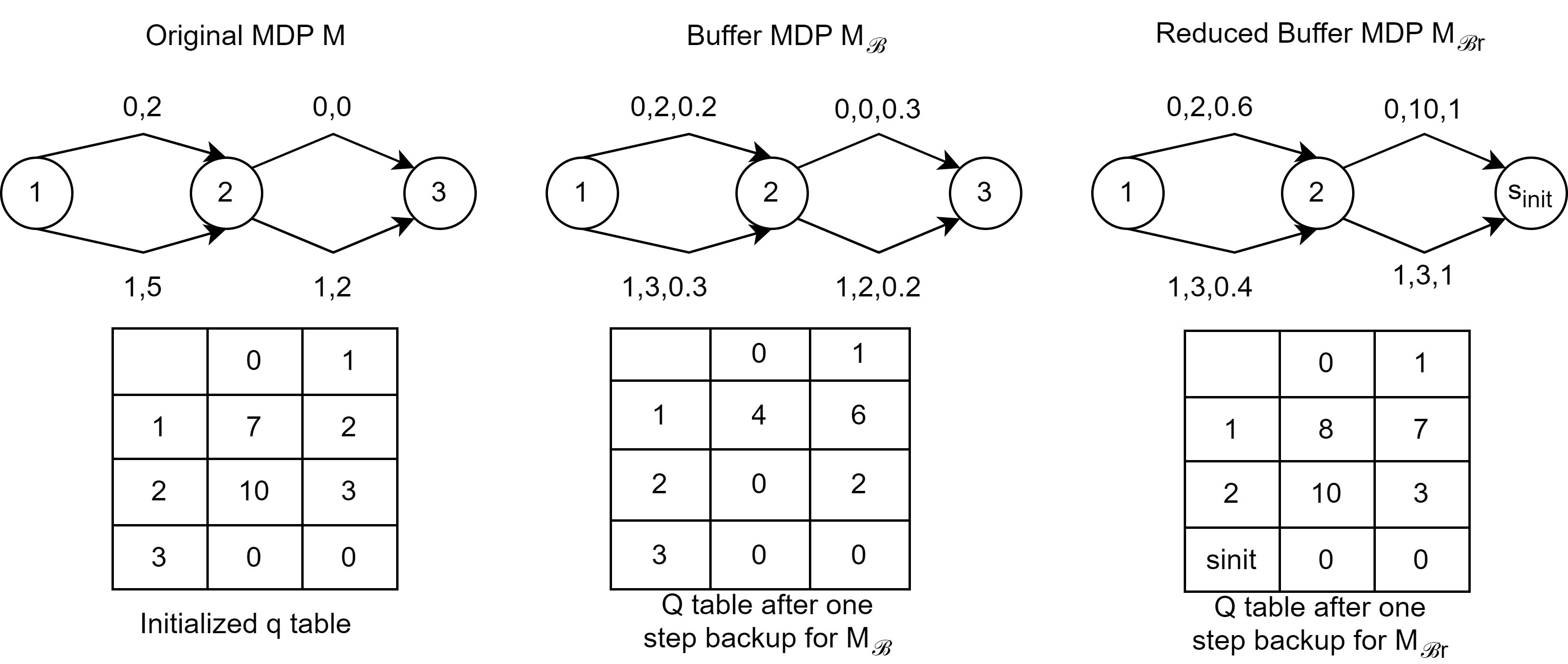}
\end{center}
\caption{Example MDP, sampled buffer MDP and reduced buffer with Q tables}
\label{figa.1}
\end{figure*}

\begin{wrapfigure}{l}{0.5\textwidth}
  \begin{center}
    \includegraphics[width=0.48\textwidth]{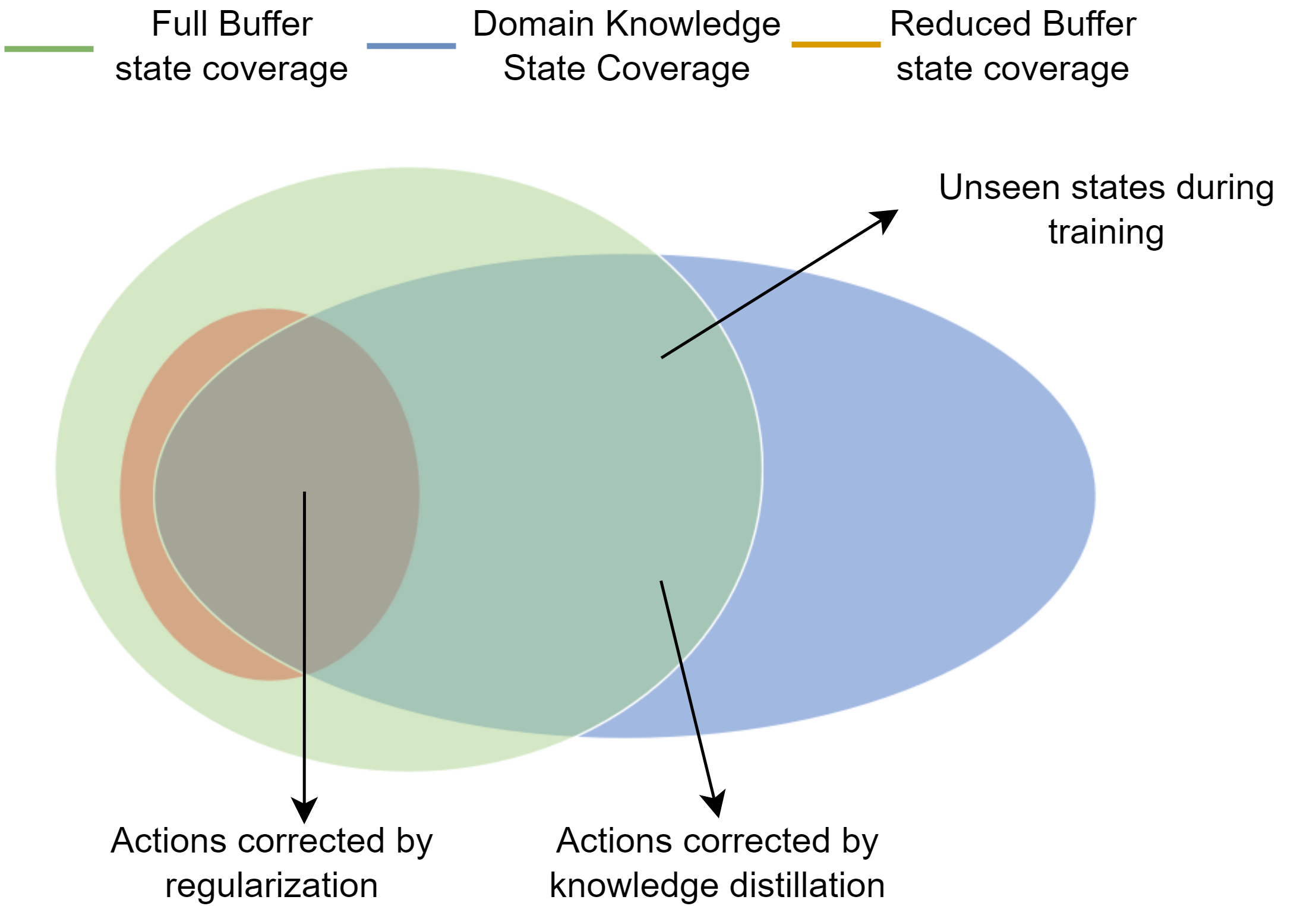}
  \end{center}
  \caption{We hypothesize the suboptimal performance of offline RL for limited data can be addressed via domain knowledge via action regularization and knowledge distillation.}
  \vspace{-0.5cm}
  \label{hypothesis}
\end{wrapfigure}

We take a simple MDP illustrated in Fig \ref{figa.1} with 3 states and 2 actions (0,1). The reward of each action is marked along the transition. The sampled MDP is constructed the following samples (1,0,2)-2,(1,1,2)-3, (2,0,3)-3, and (2,1,3)-2 and the reduced buffer MDP with samples (1,0,2)-2 and (1,1,2)-1. The probabilities are marked along the transition. It is easy to see that the policy learned under the reduced MDP converges to a nonoptimal policy after one step of the Q table update with $Q(s, a) = r(s, a)+p(s'|s, a)*max_{a'}(Q(s', a'))$. This happens because of transition probability shift on reducing samples $p_{\mathcal{B}}(s'|s,a) \neq p_{\mathcal{B}_r}(s'|s,a)$ and no Q updates for $(s,a) \notin \mathcal{B}_r$.

Our methodology addresses these issues as follows:
\begin{itemize}
    \item For $s \in D \cap \mathcal{B}_r$ better actions are enforced through regularization using $\pi^{\omega}_t$ even when the transition probabilities are low for optimal transitions.
    \item Incorporating regularization distills the teacher's knowledge in the critic-enhancing generalization.
\end{itemize}
A visualization is shown in Fig \ref{hypothesis}.

\begin{proposition}
Algo \ref{alg:1} reduces generalization error if $Q^*(s,\pi_t^\omega(s)) > Q^*(s,\pi(s))$ for $s \in \mathcal{D} \cap \mathcal{B}_r$, where $\pi$ is vanilla offline RL policy learnt on $\mathcal{B}_r$.
\end{proposition}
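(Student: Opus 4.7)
The plan is to measure the generalization error as the expected sub-optimality gap $\mathrm{Gen}(\pi) = \mathbb{E}_{s \sim d_{\mathcal{B}}}[Q^*(s,\pi^*(s)) - Q^*(s,\pi(s))]$ taken over the full-buffer state distribution $d_{\mathcal{B}}$, and then to show that the ExID policy $\pi_{\mathrm{ExID}}$ satisfies $\mathrm{Gen}(\pi_{\mathrm{ExID}}) < \mathrm{Gen}(\pi)$ under the stated hypothesis. First I would partition the support of $d_{\mathcal{B}}$ into three regions: $S_1 = \mathcal{B}_r \setminus \mathcal{D}$, where neither the teacher nor the regularizer is active; $S_2 = \mathcal{D} \cap \mathcal{B}_r$, where both the CQL loss and the regularizer $\mathcal{L}_r(\theta)$ apply; and $S_3 = \mathcal{D} \setminus \mathcal{B}_r$, the OOD region covered only by the teacher. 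On $S_1$ the ExID update coincides with vanilla CQL on $\mathcal{B}_r$, so the per-state contribution to $\mathrm{Gen}$ is unchanged.

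Next I would analyze $S_2$, which is the core of the argument. Since $\mathcal{L}_r(\theta) = \mathbb{E}_{s \in S_2}[Q_s^\theta(s,a_s) - Q_s^\theta(s,a_t)]^2$ with $a_s = \mathrm{argmax}_a Q_s^\theta(s,a)$ and $a_t = \pi_t^\omega(s)$, its gradient drives $Q_s^\theta(s,a_t)$ upward relative to $Q_s^\theta(s,a_s)$ whenever $a_t \neq a_s$. At a stationary point of the combined objective in Eq.~\ref{eq4}, the greedy action of the critic on $S_2$ therefore coincides with $\pi_t^\omega(s)$ unless the CQL term in Eq.~\ref{eq1} strictly prefers another in-distribution action. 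By the hypothesis $Q^*(s,\pi_t^\omega(s)) > Q^*(s,\pi(s))$ on $S_2$, replacing the vanilla greedy action by the teacher action strictly decreases the pointwise sub-optimality gap, contributing a negative term to $\mathrm{Gen}(\pi_{\mathrm{ExID}}) - \mathrm{Gen}(\pi)$.

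For $S_3$ I would invoke the counterexample in App.~\ref{Aexp}: vanilla CQL on $\mathcal{B}_r$ leaves the critic uninformed for $s \notin \mathcal{B}_r$, so $Q^*(s,\pi(s))$ can be arbitrarily bad there. In contrast, the warm-start phase (Algo.~\ref{alg:1} lines 14--16) distills $\pi_t^\omega$ into $Q_s^\theta$ over $S_2$, and continuity of the function approximator extends the teacher's action preference to neighbouring states in $S_3$; pointwise the ExID sub-optimality on $S_3$ is then bounded by that of the teacher itself, which by the assumption $Q^*(s,\pi_t^\omega(s)) > Q^*(s,\pi(s))$ is strictly smaller than the vanilla one. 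Finally, when the teacher update is triggered, Cond.~\ref{eq6} requires both higher mean critic-Q and lower MC-dropout variance than the current teacher, so the cross-entropy step in Eq.~\ref{eq7} can only move $\pi_t^\omega$ toward actions with provably higher Q and lower OOD-uncertainty, preserving the hypothesis of the proposition.

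The main obstacle will be making the $S_2$ argument rigorous: the fixed-point analysis of Eq.~\ref{eq4} requires controlling the competition between the CQL log-sum-exp penalty and the quadratic regularizer, and one must exclude pathological minimizers with $a_s \neq a_t$ but $Q_s^\theta(s,a_s) > Q_s^\theta(s,a_t)$. I would sidestep this by taking $\lambda$ large enough that the quadratic term dominates on $S_2$ whenever $a_s \neq a_t$, stating the conclusion as a strict inequality $\mathrm{Gen}(\pi_{\mathrm{ExID}}) < \mathrm{Gen}(\pi)$ under this scaling. A secondary difficulty is that the teacher-update step changes $\pi_t^\omega$ across iterations, so strictly one should prove the statement by induction on training step $n$, using Cond.~\ref{eq6} as the inductive hypothesis that guarantees monotone improvement of the teacher along the run.
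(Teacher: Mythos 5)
Your plan follows the same basic route as the paper's own proof: both express the generalization error through the per-state optimality gap $Q^*(s,\pi(s))-V^*(s)$, split the state space according to membership in $\mathcal{D}$, apply the hypothesis $Q^*(s,\pi_t^\omega(s))>Q^*(s,\pi(s))$ pointwise on the covered region, and argue that the uncovered region contributes identically for both policies. The differences are in the bookkeeping, and they mostly make your version more careful. First, the paper adopts Murphy's definition $G_\pi=V^*(s_0)-V_\pi(s_0)=-\mathbb{E}_{\tau\sim\pi}\bigl[\sum_t\gamma^t(Q^*(s_t,\pi(s_t))-V^*(s_t))\bigr]$, so its integrand carries the on-policy visitation weight $P(s_t=s\mid\pi)$; it must then assume $f(s\mid\hat\pi)\approx f(s\mid\pi)$ on $S\setminus\mathcal{D}$, which is delicate because changing the policy on $\mathcal{D}$ also changes which states are visited afterwards. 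Your fixed reference distribution $d_{\mathcal{B}}$ sidesteps that visitation-shift issue, at the cost of bounding a slightly different quantity than $V^*(s_0)-V_\pi(s_0)$. Second, the paper partitions only into $S\setminus\mathcal{D}$ and $\mathcal{D}$ and applies the hypothesis on all of $\mathcal{D}$, even though the regularizer $\mathcal{L}_r(\theta)$ only acts on $s\in\mathcal{D}\cap\mathcal{B}_r$; your third region $S_3=\mathcal{D}\setminus\mathcal{B}_r$ and the explicit appeal to function-approximator generalization surface an assumption the paper leaves implicit. Third, your attention to the fixed point of the combined loss, the role of $\lambda$, and the induction over teacher updates addresses training dynamics that the paper's proof does not model at all---it simply asserts that regularization makes $\hat\pi$ close to $\pi_t^\omega$ on $\mathcal{D}$. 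One caution on signs: since $Q^*(s,\pi(s))-V^*(s)\le 0$ and the error is its negation, the hypothesis makes the integrand over $\mathcal{D}$ \emph{larger} (less negative) for $\hat\pi$, which is what yields $G_{\hat\pi}<G_\pi$; state your pointwise claim as the shrinking of $V^*(s)-Q^*(s,\hat\pi(s))$ and keep the inequality direction consistent through the final negation.
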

\begin{proof} 
\label{propproof}
Generalization error for any policy $\pi$ as defined by \cite{murphy2005generalization} can be written as:
\begin{gather}
G_\pi = V^*(s_0) - V_\pi(s_0) = -\mathbb{E}_{\tau \sim \pi} [\sum_{t=0}^T \gamma^t Q^*(s_t,\pi(s_t)) - V^*(s_t)] 
\\
\intertext{\noindent \flushleft Here, $\mathbb{E}_{\tau \sim \pi}$ represents sampling trajectories with policy $\pi$. Since the state space is continuous, we can represent the expectation as an integral over the state space} \nonumber
\\
= -\sum _{t=0}^T \gamma^t \int _{s \in S} P(s_t = s| \pi) (Q^*(s_t,\pi(s_t)) - V^*(s_t))ds = -\int _{s \in S} \sum _{t=0}^T \gamma^t P(s_t = s| \pi) (Q^*(s_t,\pi(s_t)) - V^*(s_t))ds
\intertext{\noindent \flushleft Analysing with respect to $s \in \mathcal{D} \cap \mathcal{B}_r$ we can break the integration into two parts} \nonumber\\
= -\left [\int _{s \in S/D} \sum _{t=0}^{T} \gamma^t P(s_t = s| \pi) (Q^*(s_t,\pi(s_t)) - V^*(s))ds + \int _{s \in D} \sum _{t=0}^{T} \gamma^t P(s_t = s| \pi) (Q^*(s_t,\pi(s_t)) - V^*(s_t)) \right ]\\
=-\left [f(s|\pi) + \int _{s \in D} \sum _{t=0}^{T} \gamma^t P(s_t = s| \pi) (Q^*(s_t,\pi(s_t)) - V^*(s_t)) \right ]
\intertext{\noindent For a policy $\hat{\pi}$ learnt in Algo \ref{alg:1} the action for $s_t = s \in \mathcal{D}$ is regularized to be close to $\pi_t^\omega$ which either follows domain knowledge or expert demonstrations. Hence, it is reasonable to assume $Q^*(s_t,\pi_t^\omega(s_t)) > Q^*(s_t,\pi(s_t))$. It follows}
\int _{s \in D} \sum _{t=0}^{T} \gamma^t P(s_t = s| \hat{\pi}) (Q^*(s_t,\hat{\pi}(s_t)) - V^*(s_t))  < \int _{s \in D} \sum _{t=0}^{T} \gamma^t P(s_t = s| \pi) (Q^*(s_t,\pi(s_t)) - V^*(s_t)) \\
\intertext{\textbf{Note for $s \notin \mathcal{D}$, $f(s|\hat{\pi}) \approx f(s|\pi)$.} This is because the regularization term assigns max Q value to a different action for $s \in \mathcal{D}$ but $max_a(Q(s, a))$ remains same}\nonumber\\
\therefore  -\left [f(s|\hat{\pi}) + \int _{s \in D} \sum _{t=0}^{T} \gamma^t P(s_t = s| \hat{\pi}) (Q^*(s_t,\hat{\pi}(s_t)) - V^*(s_t)) \right ] < -\left [f(s|\pi) + \int _{s \in D} \sum _{t=0}^{T} \gamma^t P(s_t = s| \pi) (Q^*(s_t,\pi(s_t)) - V^*(s_t)) \right ]\\
\intertext{\noindent Hence, $G_{\hat{\pi}} < G_{\pi}$ Proposition 1.2 follows \textbf{Q.E.D}}\nonumber
\end{gather}
\end{proof}

\section{Environments and Domain Knowledge Trees}

\label{envdescp}

\begin{figure*}[!h]
\begin{center}
\includegraphics[width=\textwidth]{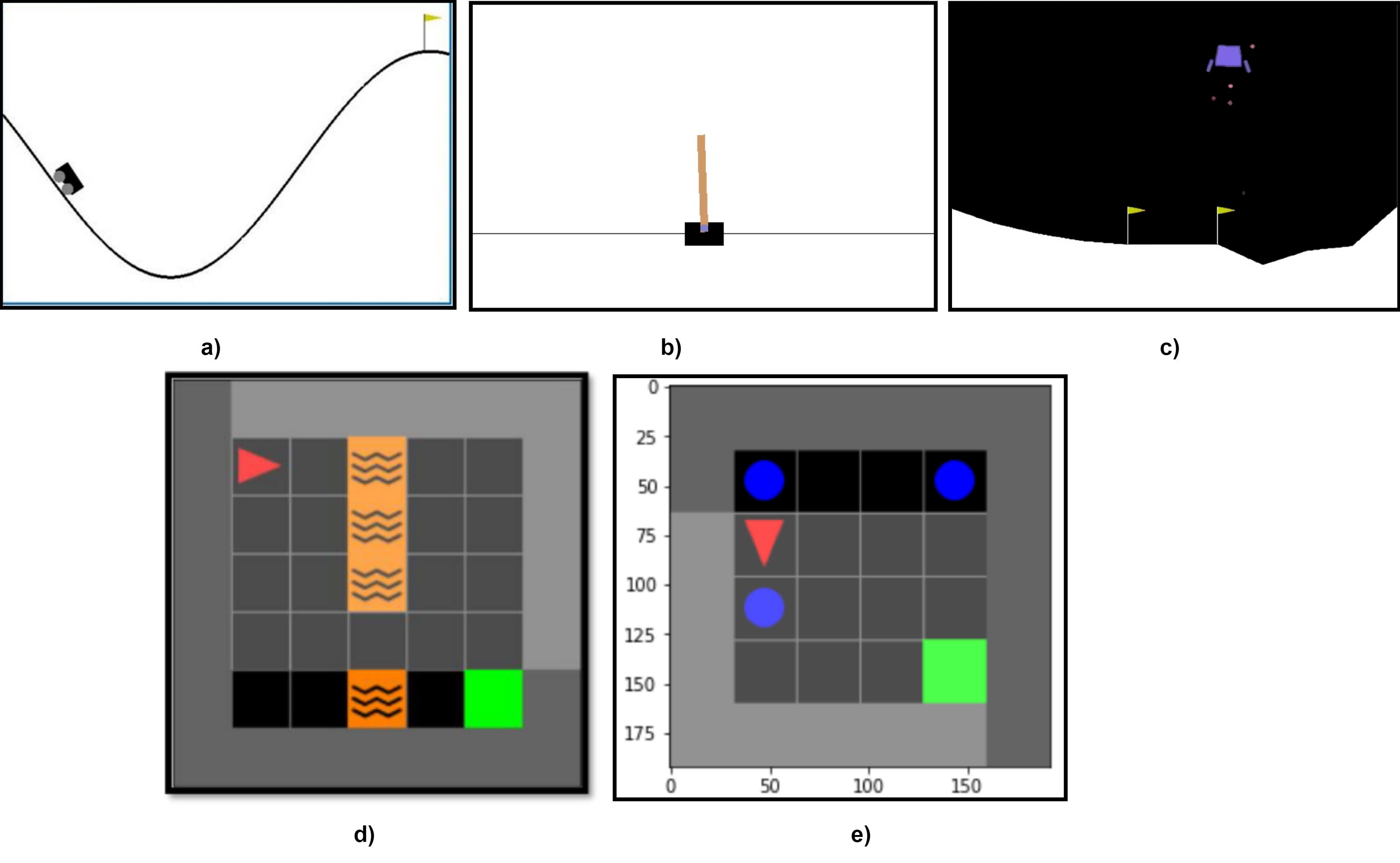}
\end{center}
\caption{Graphical visualizations of environments used in the experiments. These environments are a) \textsf{MountainCar-v0} b) \textsf{CartPole-v1} c) \textsf{LunarLander-v2} d) \textsf{MiniGrid-LavaGapS7-v0} e) \textsf{MiniGrid-Dynamic-Obstacles-Random-6x6-v0}} 
\label{figa.env}
\end{figure*}

The graphical visualization of each environment is depicted in Fig \ref{figa.env}. The choice of environment in this paper depended on two factors: a) Pre-existing standard methods of generating offline RL datasets. b) Possibility of creating intuitive decision tree-based domain knowledge. All datasets have been created via \cite{schweighofer2021understanding}. We explain the environments in detail as follows:

\textbf{Mountain-car Environment:} This environment Fig \ref{figa.env} a) has two state variables, position and velocity, and three discrete actions: left push, right push, and no action \cite{moore1990efficient}. The goal is to drive a car up a valley to reach the flag. This environment is challenging for offline RL because of sparse rewards, which are only obtained on reaching the flag.

\textbf{Cart-pole Environment} The environment Fig \ref{figa.env} b) has 4 states and 2 actions representing left force and right force. The objective is to balance a pole on a moving cart.

\textbf{Lunar-Lander Environment:} The task is to land a lunar rover between two flags Fig \ref{figa.env} c) by observing 8 states and applying one of 4 actions. 

\textbf{Minigrid Environments:} Mini-grid \cite{MinigridMiniworld23} is an environment suite containing 2D grid-worlds with goal oriented tasks. As explained in the main text, we experiment using MiniGrid-LavaGapS7-v0 and MiniGrid-Dynamic-Obstacles-Random-6x6-v0 from this environment suite is shown in Fig \ref{figa.env} d) and e). In MiniGrid-LavaGapS7-v0, the agent has to avoid Lava and pass through the gap to reach the goal. Dynamic obstacles are similar; however, the agent can start at a random position and has to avoid dynamically moving balls to reach the goal. The environment has image observation with 3 channels (OBJECT\_ID, COLOR\_ID, STATE). Following \cite{schweighofer2021understanding} experiments, we flatten the image to an array of 98 observations and restrict action space to three actions: Turn left, Turn Right, and Move forward.

The domain knowledge trees for all the environments are shown in Fig \ref{domknow}. The cart pole domain knowledge tree Fig \ref{domknow} a) is taken from \cite{silva2021encoding} (Fig 7). The Lunar Lander decision nodes Fig \ref{domknow} b) have been taken from \cite{silva2020optimization} (Fig4). For the mini-grid environments, we construct intuitive decision trees shown in Fig \ref{domknow} d) and Fig \ref{domknow} e). Positions 52,  40, and 68 represent positions front, right, and left of the agent. Value 0.2 represents a wall, 0.9 represents Lava, and 0.6 represents a ball. We check positions 52, 40, and 68 for these obstacles and choose the recommended actions as domain knowledge.

\begin{figure*}[!h]
\begin{center}
\includegraphics[width=\textwidth]{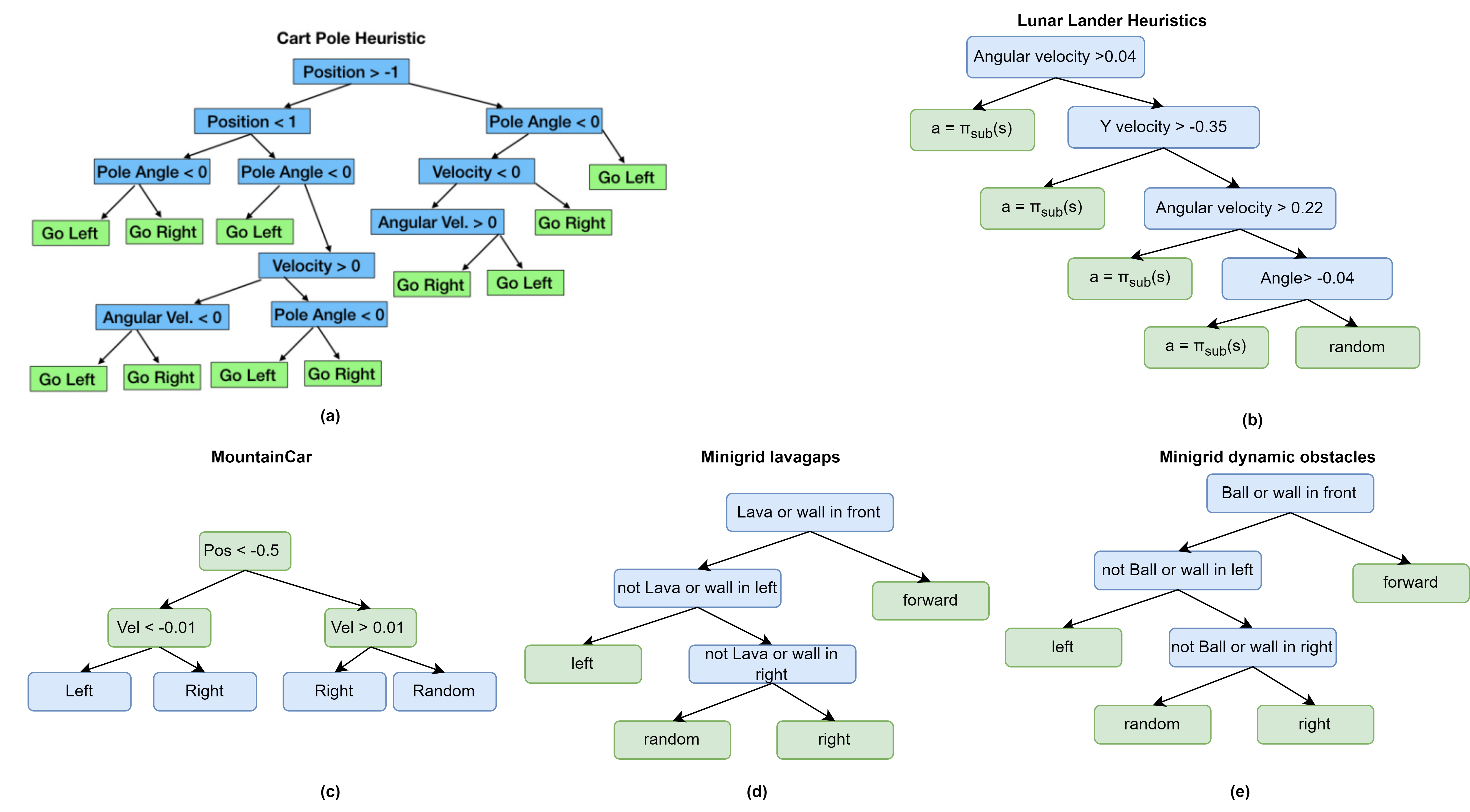}
\end{center}
\caption{Domain knowledge trees for a) \textsf{CartPole-v1} b) \textsf{LunarLander-v2} c) \textsf{MountainCar-v0} d) \textsf{MiniGrid-LavaGapS7-v0} e) \textsf{MiniGrid-Dynamic-Obstacles-Random-6x6-v0} environments }
\label{domknow}
\end{figure*}

\section{Related Work: Knowledge Distillation}

Knowledge distillation is a well-embraced technique of incorporating additional information in neural networks and has been applied to various fields like computer vision \cite{xie2020self,sohn2020simple}, natural language processing~\cite{bert,tang2019distilling}, and recommendation systems~\cite{tang2018ranking}. \cite{hinton2015distilling} introduced the concept of distilling knowledge from a complex, pre-trained model (teacher) into a smaller model (student). In recent years, researchers have explored the integration of rule-based regularization techniques within the context of knowledge distillation. Rule regularization introduces additional constraints based on predefined rules, guiding the learning process of the student model \cite{hu2016harnessing, 9157663}. These techniques have shown to reduce overfitting and enhance generalization \cite{tang2019distilling}.  Knowledge distillation is also prevalent in the field of RL \cite{9488863} and offline RL \cite{NEURIPS2022_01d78b29}. Contrary to prevalent teacher-student knowledge distillation techniques, our work does not enforce parameter sharing among the networks. Through experiments, we demonstrate that a simple regularization loss and expected performance-based updates can improve generalization to unobserved states covered by domain knowledge. There are also no constraints on keeping the same network structure for the teacher, paving ways for capturing the domain knowledge into more structured networks such as Differentiable Decision Trees (DDTs).

\section{Network Architecture and Hyper-parameters}
\label{hyperparameters}
We follow the network architecture and hyper-parameters proposed by \cite{schweighofer2021understanding} for all our networks, including the baseline networks. The teacher BC network $\pi_{\omega}^t$ and Critic network $Q_s^\theta(s, a)$ consists of 3 linear layers, each having a hidden size of 256 neurons. The number of input and output neurons depends on the environment's state and action size. All layers except the last are SELU activation functions; the final layer uses linear activation. $\pi_{\omega}^t$  uses a softmax activation function in the last layer for producing action probabilities. A learning rate of 0.0001 with batch size 32 and $\alpha = 0.1$ is used for all environments. MC dropout probability of 0.5 and number of stochastic passes T=10 have been used for the critic network. The uncertainty check is performed every 15 episodes after the warm start to avoid computational overhead. The hyper-parameters specific to our algorithm for OpenAI gym are reported in Table \ref{hyptabgym}. The hyper-parameters specific to our algorithm for Minigrid environments are reported in Table \ref{minihyper}.

\begin{table}[!h]
\centering
\label{hyptabgym}
\caption{Hyperparameters for openAI gym environments}
\begin{small}
\begin{sc}
\begin{tblr}{
  width = \linewidth,
  colspec = {Q[185]Q[79]Q[100]Q[79]Q[79]Q[83]Q[79]Q[79]Q[92]Q[79]},
  column{8} = {c},
  column{9} = {c},
  column{10} = {c},
  vline{2,5,8,11} = {-}{},
  hline{1,7} = {-}{0.08em},
  hline{2} = {-}{},
}
Hyperparameter &  & MountainCar &  &  & CartPole &  &  & Lunar-Lander~ & \\
Data type & Expert & Replay & Noisy & Expert & Replay & Noisy & Expert & Replay & Noisy\\
$\lambda$ & 0.5 & 0.5 & 0.5 & 0.5 & 0.5 & 0.5 & 0.5 & 0.5 & 0.5\\
$k$ & 30 & 30 & 30 & 30 & 30 & 30 & 30 & 30 & 30\\
$\pi_\omega^t$ lr & $1e^5$ & $1e^5$ & $1e^5$ & $1e^2$ & $1e^2$ & $1e^2$ & $1e^4$ & $1e^4$ & $1e^4$\\
training steps & 42000 & 36000 & 36000 & 30000 & 17000 & 17000 & 18000 & 18000 & 18000
\end{tblr}
\end{sc}
\end{small}
\end{table}

\begin{table}
\centering
\caption{Hyper-parameters for Mini-grid environments for replay dataset}
\label{minihyper}
\begin{tblr}{
  width = 0.8\linewidth,
  colspec = {Q[337]Q[369]Q[233]},
  cells = {c},
  hline{1,6} = {-}{0.08em},
  hline{2} = {-}{},
}
Environment & MiniGridDynamicObstRandom6x6-v0 & MiniGridLavaGapS7v0\\
$\lambda$ & 0.1 & 0.1\\
$k$ & 30 & 30\\
$\pi_\omega^t$~lr & $1e^4$ & $1e^4$\\
training steps & 5000 & 10000
\end{tblr}
\end{table}

\begin{figure*}[!t]
\includegraphics[width=\textwidth]{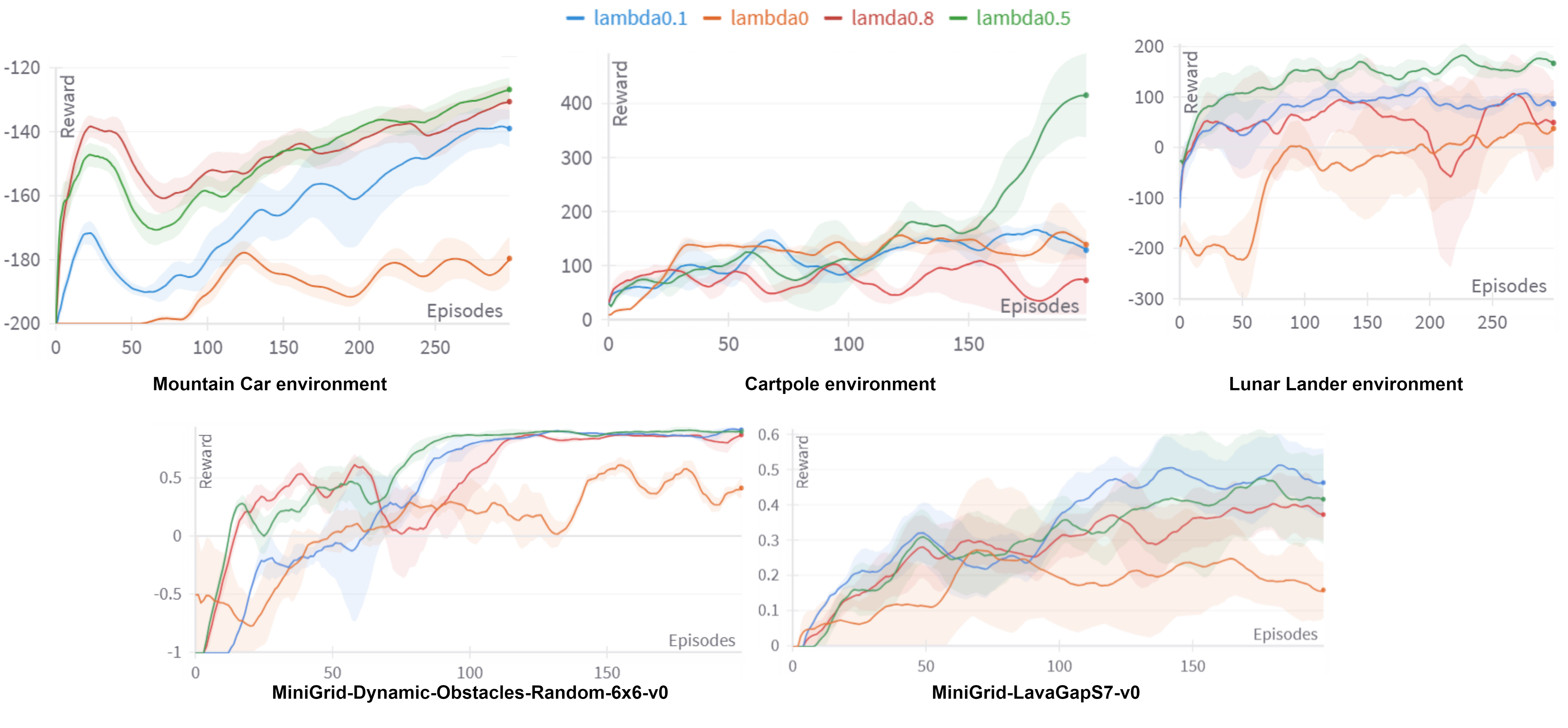}
\caption{Effect of $\lambda$ on the performance of ExID for different environments expert datasets.}
\label{figeffectlambda}
\end{figure*}

\section{ Effect of $k$ and $\lambda$  and Evaluation  Plots}

\label{lamda}

We empirically evaluate the effect of $\lambda$ In Fig \ref{figeffectlambda} and $k$ in Fig \ref{figeffectk}. We believe these parameters depend on the quality of $\mathcal{D}$. For the given $\mathcal{D}$ in the environments we empirically observe, $\lambda = 0.5$ generally performs well, except for Minigrid environments where $\lambda = 0.1$ works better. Increasing the warm start parameter $k$ generally increases the initial performance of the policy, allowing it to learn from the teacher. Meanwhile, no warm start adversely affects policy performance as the critic may erroneously update the teacher. From empirical evaluation, we observe that $k=30$ gives a reasonable start to the policy. All the evaluation plots are shown in Fig \ref{figappeva}, where it can be observed that ExID performs better than baseline CQL.

\begin{figure*}[!ht]
\includegraphics[width=\textwidth]{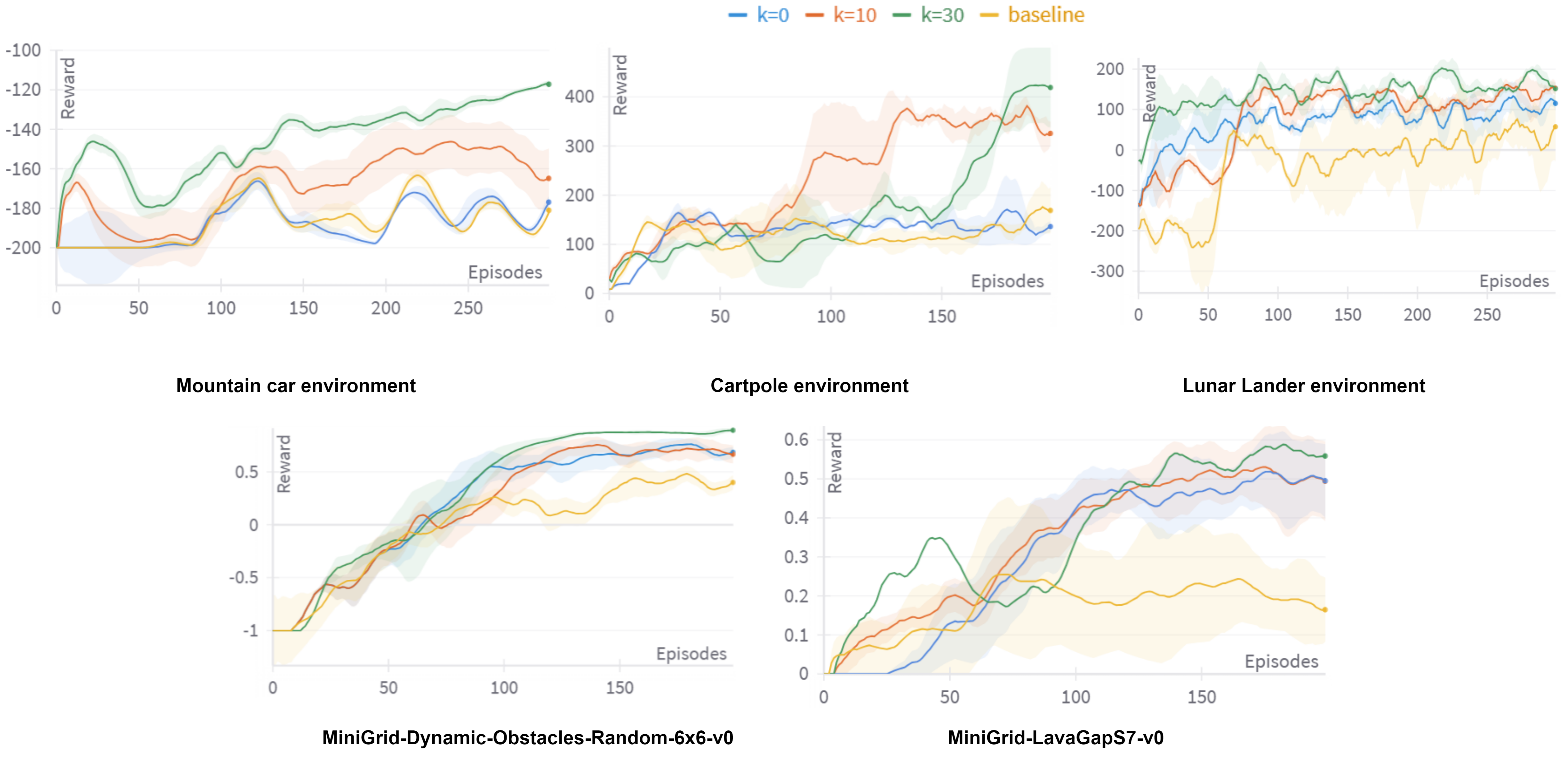}
\caption{Effect of $k$ on the performance of ExID for different environments expert datasets.}
\label{figeffectk}
\end{figure*}

\begin{figure*}[!h]
\includegraphics[width=\textwidth]{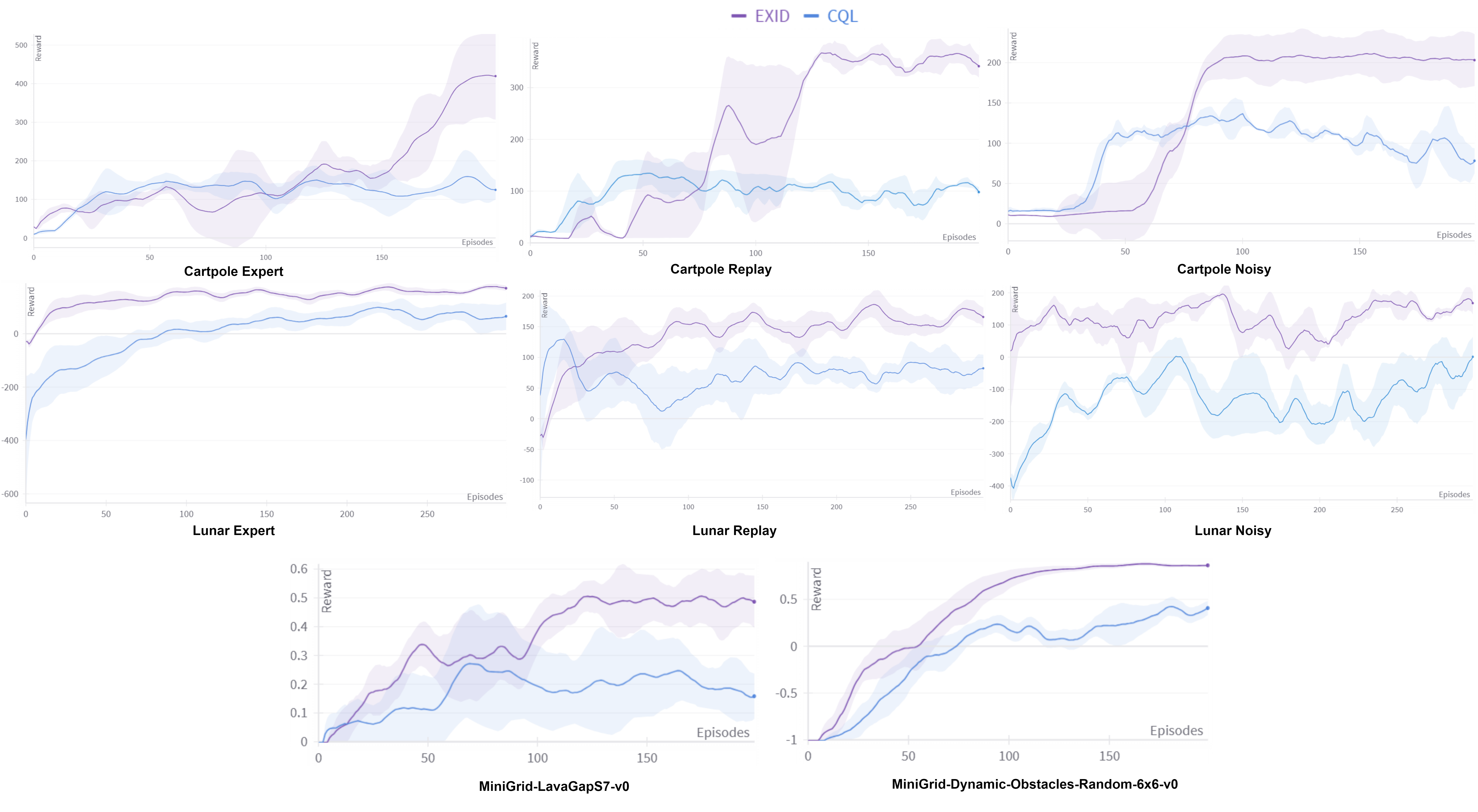}
\caption{Evaluation plots of CQL and EXID algorithms for Cartpole, Lunar-Lander, and Minigrid environments using different data types and seeds reported in the main paper Table \ref{tab:my-table}.}
\label{figappeva}
\end{figure*}

\section{Data reduction design and data distribution visualization of reduced dataset}

\label{visualization}

\begin{figure}
  \begin{center}
    \includegraphics[width=0.8\textwidth]{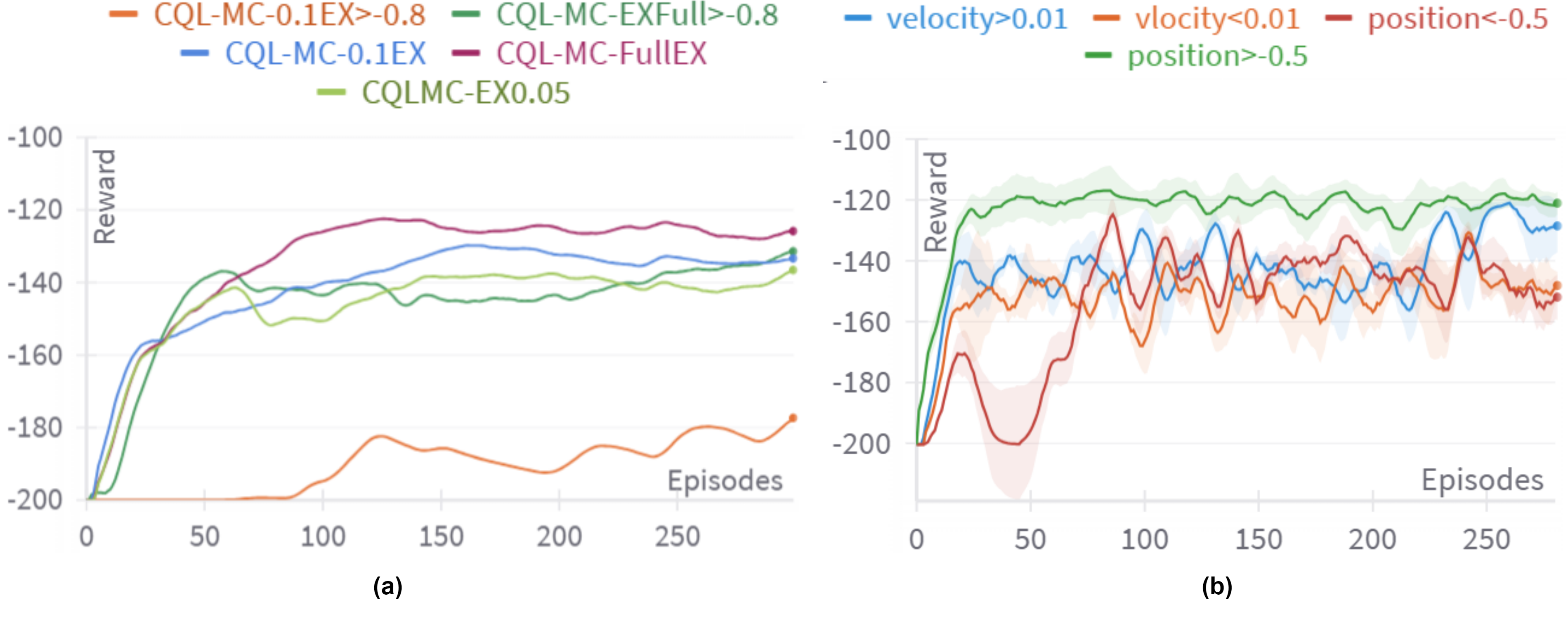}
  \end{center}
  \caption{(a) The effect of data reduction and removal on baseline CQL visualized on Mountain Car Environment (b) Performance of ExID on removing different parts of the data based on nodes of Fig \ref{domknow} (c) from Mountain Car expert dataset}
  \label{effectdaatMC}
\end{figure}

In this section, we discuss the intuition behind our data-limiting choices. We also visually represent selected reduced datasets for the OpenAI gym environments. 

\textbf{Reducing transitions from the dataset:} For all datasets, 10\% of the data samples were extracted from the full dataset. This experimental design choice is based on the observation shown in Fig \ref{effectdaatMC} (a). Performance degrades on reducing samples to 0.1\% of the dataset and reduces further on reducing samples to 0.05\% of the dataset. However, this drop is not substantial. The performance also reduces on removing part of the dataset from the full dataset with states $> -0.8$. However, the worst performance is observed when both samples are reduced and data is omitted, attributing to accumulated errors from probability ratio shift contributing to an increase in generalization error. Our methodology aims to address this gap in performance. 

\textbf{Removing part of the state space:} Due to the simplicity of the Mountain-Car environment, we analyze the Mountain-Car expert dataset to show the effect of removing data matching state conditions of the different nodes in the decision tree in Fig \ref{domknow} (c). The performance for each condition is summarised in Table \ref{dataremoval}.   The most informative node in the tree is position $>-0.5$; removing states matching this condition causes a performance drop in the algorithm as the domain knowledge regularization does not contribute significant information to the policy. Similarly, removing data with velocity $< 0.01$ causes a performance drop. However, both performances are higher than the baseline CQL trained on reduced data. Based on this observation, we choose state removal conditions that preserve states matching part of the information in the tree such that the regularization term contributes substantially to the policy.  Fig \ref{mcdata} shows the data distribution plot of 10\% samples extracted from mountain car replay and noisy data with states $>-0.8$ removed.  Fig \ref{cartdata} shows visualizations for 10\% samples extracted from expert data with velocity $>-1.5$ removed. Fig \ref{lunardata} shows visualizations for 10\% samples extracted from expert data with lander angle $<-0.04$ removed.

\begin{table}[!h]
\centering
\caption{Performance of ExID on removing different parts of the data based on nodes of Fig \ref{domknow} (c) from Mountain Car expert dataset}
\label{dataremoval}
\begin{tblr}{
  width = \linewidth,
  colspec = {Q[233]Q[210]Q[254]Q[242]},
  cells = {c},
  hlines,
  vlines,
}
Position$>$-0.5 & Position$<$-0.5 & Velocity$>$0.01 & Velocity$<$0.01\\
-121.89 $\pm$ 7.69 & -151 $\pm$ 13.6 & -128.48 $\pm$ 11.84 & -147.80 $\pm$ 5.01
\end{tblr}
\end{table}

\begin{figure*}[!h]
\includegraphics[width=\textwidth]{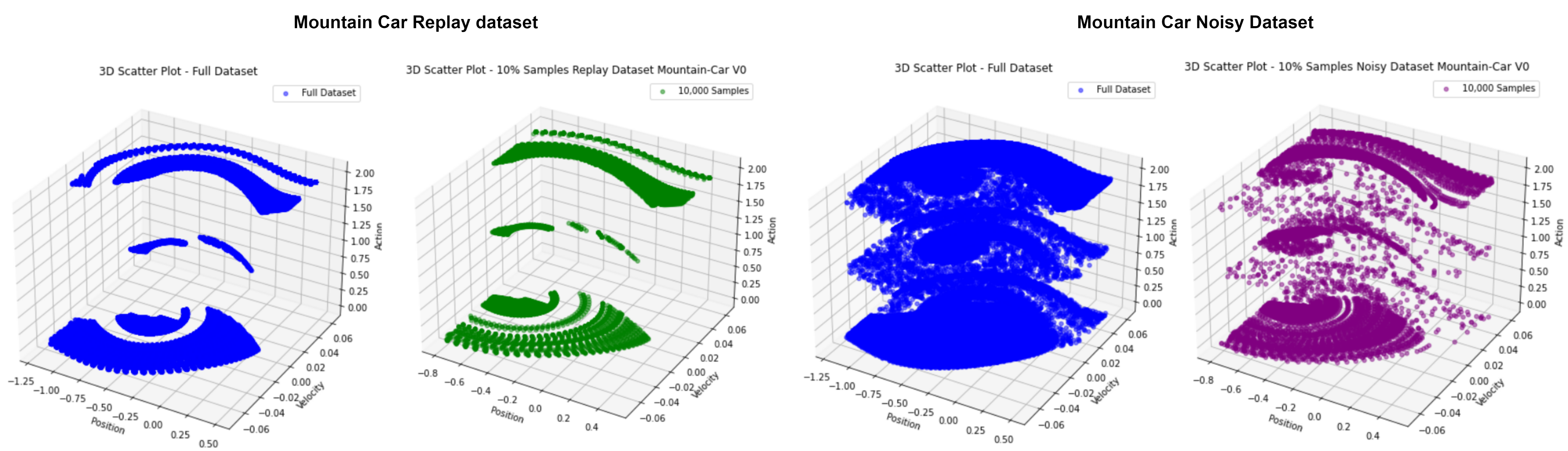}
\caption{Data distribution of reduced dataset compared to the full dataset for mountain replay and noisy data }
\label{mcdata}
\end{figure*}

\begin{figure*}[!h]
\centering
\includegraphics[width=0.9\textwidth]{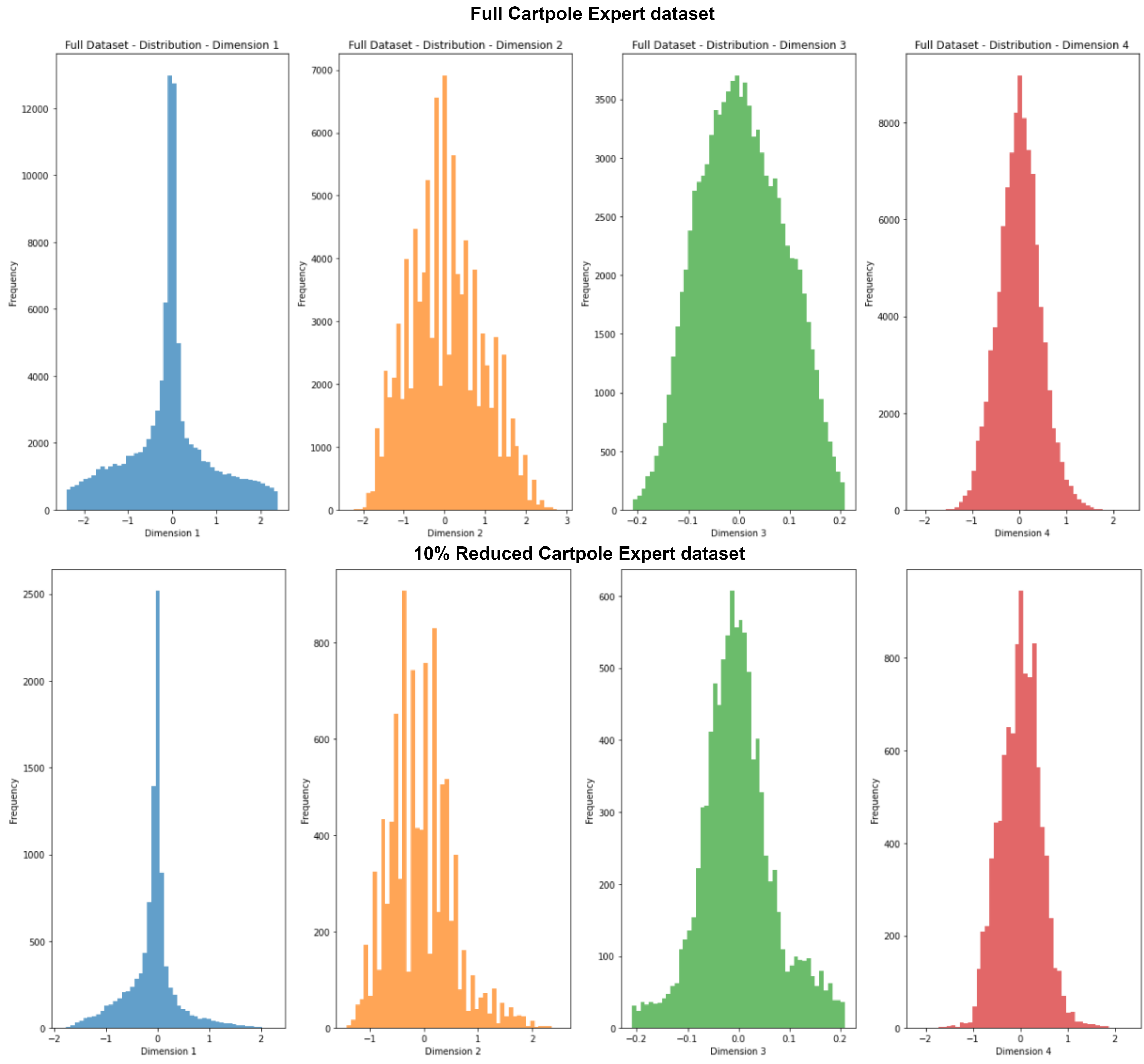}
\caption{Data distribution of reduced cart pole expert dataset compared to the full dataset}
\label{cartdata}
\end{figure*}

\begin{figure*}[!h]
\centering
\includegraphics[width=\textwidth]{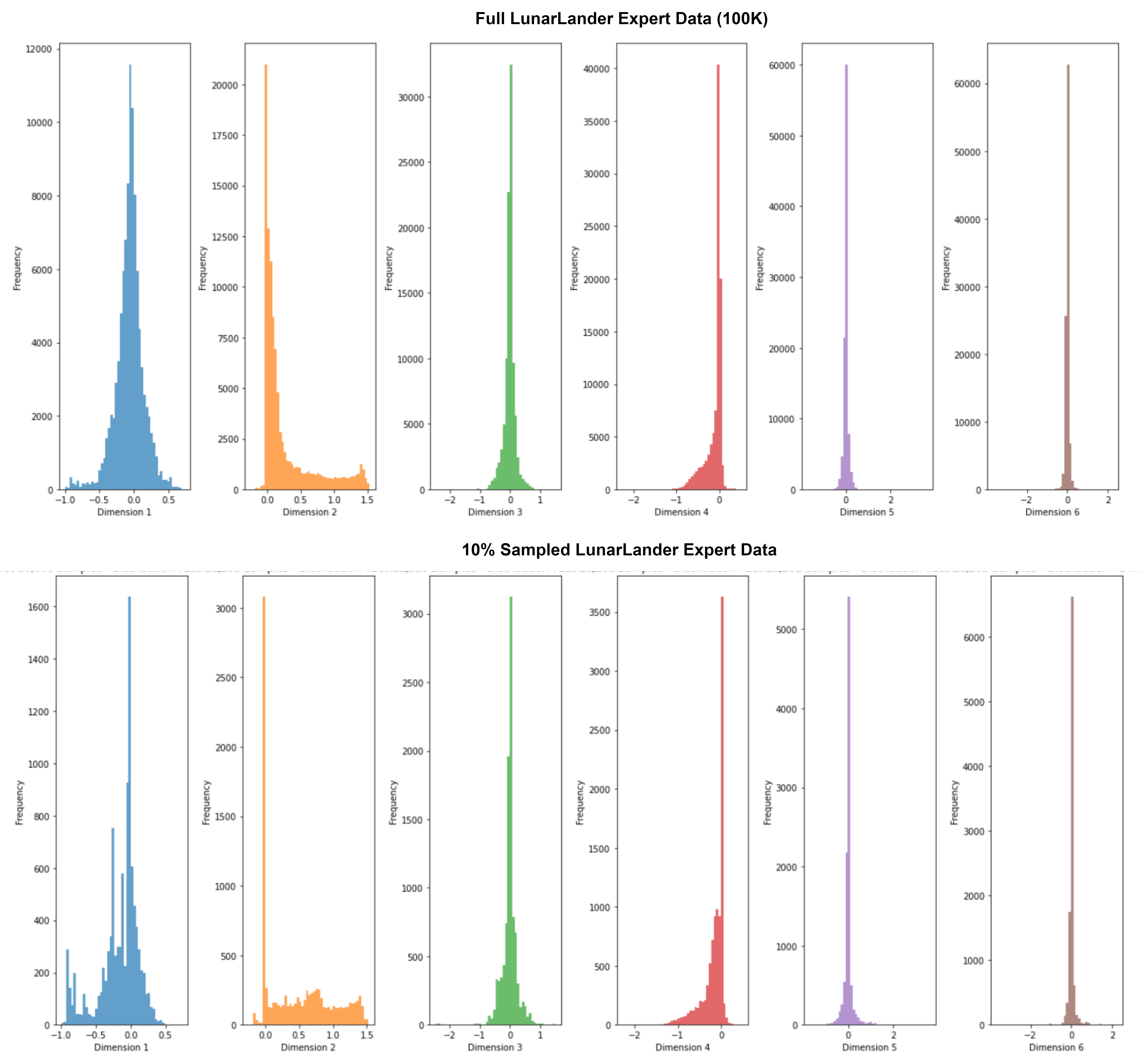}
\caption{Data distribution of reduced LunarLander expert dataset compared to the full dataset}
\label{lunardata}
\end{figure*}


\end{document}